\documentclass[onecolumn]{fairmeta}

\usepackage{amssymb}
\usepackage{mathtools}
\usepackage{amsthm}
\usepackage{amsfonts}

\usepackage[utf8]{inputenc}
\usepackage{url}
\usepackage{nicefrac}

\usepackage{adjustbox}

\usepackage{algorithm}
\usepackage{algorithmic}

\usepackage{enumitem}

\usepackage{wrapfig}

\theoremstyle{plain}
\newtheorem{theorem}{Theorem}

\theoremstyle{definition}

\theoremstyle{remark}

\newtheorem{example}{Example}

\usepackage[textsize=tiny]{todonotes}

\usepackage{tikz}

\usepackage{tikz}

\title{K-Forcing: Joint Next-K-Token Decoding via Push-Forward Language Modeling}

\author[1,2,3,*]{Zhiwei Tang}
\author[3,*]{Yuanyu He}
\author[1]{Yizheng Han}
\author[4]{Wangbo Zhao}
\author[1,2]{Jiasheng Tang}
\author[1]{Fan Wang}
\author[1,3]{Bohan Zhuang}

\affiliation[1]{DAMO Academy, Alibaba Group}
\affiliation[2]{Hupan Lab}
\affiliation[3]{Zhejiang University}
\affiliation[4]{The Hong Kong University of Science and Technology}

\contribution[*]{Equal contribution}
\metadata[Email]{\href{mailto:mcstzw@gmail.com}{mcstzw@gmail.com}, \href{mailto:bohan.zhuang@gmail.com}{bohan.zhuang@gmail.com}}
\metadata[Code]{\url{https://github.com/alibaba-damo-academy/K-Forcing}}

\abstract{
Autoregressive (AR) language modeling is the dominant paradigm for text
generation, yet its sequential token-by-token decoding makes inference
memory-bound and inefficient. Existing acceleration approaches, such as
speculative decoding and diffusion language models, can yield speedups under
certain conditions but do not directly address high-load \emph{batch
serving}---the scenario most critical for industrial-scale deployment.
We introduce \textbf{K-Forcing}, a push-forward language modeling paradigm for
\emph{joint next-$k$-token decoding}. K-Forcing distills an existing AR model
into a conditional push-forward mapping---one that transforms independent
uniform noise variables into a joint sample of multiple future tokens in a
single forward pass. This design preserves fixed-length outputs, reuses the AR
teacher backbone, and remains compatible with standard AR serving
infrastructure. We train this mapping via \emph{progressive self-forcing distillation},
which gradually expands the prediction window while enabling the student to
closely match the sequence distribution of the AR teacher.
We evaluate K-Forcing on LM1B and OpenWebText using a standard causal
Transformer backbone. When aggressively configured to generate $k=4$ tokens per forward
pass, K-Forcing delivers approximately $2.4$--$3.5\times$ speedup across
different batch sizes, while incurring modest quality degradation relative
to its AR teacher.  As inference increasingly dominates the lifetime compute cost
of modern LLMs, K-Forcing offers a promising route toward accelerating AR
generation under real-world high-load deployment.
}

\begin{document}
\maketitle

\begin{figure}[t]
  \centering
  \includegraphics[width=\textwidth]{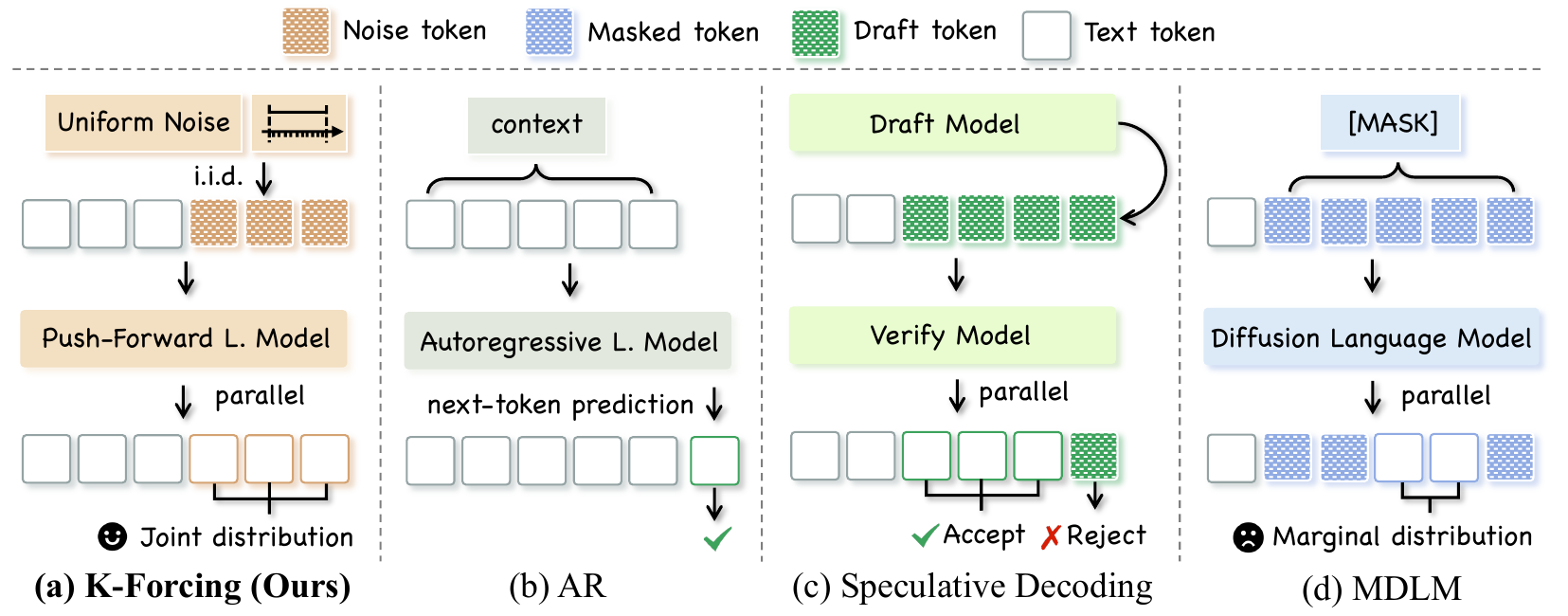}
\caption{Comparison of four language-model inference paradigms within one NFE (number of forward evaluations).
\textbf{(a)}~\textbf{K-Forcing (ours)} uses a push-forward language model to map i.i.d.\ uniform noise tokens to a
\emph{fixed-length} block of future tokens, modeling their \emph{joint}
distribution.
\textbf{(b)}~\textbf{AR} predicts one next token from the current context,
leading to memory-bound decoding.
\textbf{(c)}~\textbf{Speculative decoding} drafts a token block and verifies it
with the target AR model, yielding a variable number of accepted tokens that
breaks regular batching.
\textbf{(d)}~\textbf{MDLM} predicts masked positions in parallel from
per-position \emph{marginals}, rather than their joint distribution.}
\label{fig:paradigm}
\end{figure}

\section{Introduction}
\label{sec:1:intro}

Large Language Models (LLMs) have demonstrated remarkable capabilities across
a wide range of tasks~\citep{hurst2024gpt,guo2025deepseek,jimenez2023swe},
driven largely by autoregressive (AR) language modeling
\citep{sutskever2014sequence,radford2018improving,brown2020language} with
Transformer architectures~\citep{vaswani2017attention}. AR models represent
sequence distributions by factorizing them into a chain of conditional
distributions and generating tokens one at a time. Although expressive and
effective, this strictly sequential decoding creates a fundamental efficiency
bottleneck: generating an $L$-token sequence requires $L$ forward passes, each
loading model weights and accessing a growing key-value (KV) cache from GPU
memory. Because single-token decoding has low arithmetic intensity, inference
is largely \emph{memory-bound}, leaving modern accelerators underutilized
\citep{kwon2023efficient}. As LLM inference demand grows rapidly,
high-throughput and low-latency serving has become an urgent economic
necessity.

This bottleneck has been addressed at several orthogonal levels. Serving-system
techniques such as prefill--decoding disaggregation~\citep{zhong2024distserve}
and paged attention~\citep{kwon2023efficient} improve hardware utilization and
memory management in large-batch serving. Kernel-level methods and inference
libraries such as FlashAttention~\citep{dao2022flashattention} and
FlashInfer~\citep{ye2025flashinfer} reduce attention cost, while architectural
designs such as state-space models~\citep{gu2024mamba,yang2025gated} and
sparse attention~\citep{yuan2025native} reduce the per-step computation or
memory footprint. However, these approaches mainly improve each decoding step
or the serving system around it; they largely preserve the AR sampling
structure, where each forward pass advances generation by only one token.

At the \emph{statistical modeling level}---how the sequence distribution is
parameterized and sampled---there has been comparatively less progress in
improving high-load batch-serving throughput. Since the core bottleneck is
that each forward pass generates only one token, a natural remedy is to
generate multiple future tokens per pass, amortizing memory-access cost across
several outputs. The closest lines of work are \emph{draft-then-verify}
methods~\citep{chen2023accelerating,sun2023spectr,10.5555/3692070.3693232,kou2024cllms,draxler2025parallel,kumar2026speculative}
and \emph{diffusion language models}~\citep{austin2021structured,10.5555/3692070.3693403,sahoo2024simple,nie2025large,ye2025dream}.
Draft-then-verify methods are lossless in principle but mainly reduce
\emph{single-request} latency; their variable-length accepted outputs disrupt
regular batching and can even hurt throughput under heavy load
\citep{kumar2026speculative,liu2026speculativedecodingperformanceillusion}. Diffusion language models decode multiple tokens through iterative denoising,
but their factorized marginal sampling often requires many refinement steps to
preserve quality~\citep{sahoo2024simple,nie2025large}, limiting the reduction in forward passes over AR.

To address this gap, we propose \textbf{K-Forcing}, a push-forward language
modeling paradigm for \emph{fixed-length joint next-$k$-token decoding}.
K-Forcing learns a conditional push-forward mapping from a pretrained AR
teacher to generate a joint sample of $k$ future tokens in one forward pass,
hence preserving regular batching. Our contributions are:
\begin{itemize}[leftmargin=*] 
  \item We analyze why existing modeling-level acceleration methods struggle
  in batch serving. Draft-then-verify methods produce variable-length outputs
  that disrupt regular batching, while diffusion language models reveal
  multiple tokens by sampling per-position marginals rather than their joint
  conditional distribution. This motivates the two design principles behind
  K-Forcing: fixed-length outputs and joint multi-token sampling.

\item We formulate K-Forcing as an implicit push-forward generative model for
joint multi-token sampling. We then introduce \emph{progressive self-forcing
distillation} to learn the push-forward mapping from a pretrained AR teacher,
and design a fully causal architecture that reuses the AR backbone while
remaining compatible with standard AR serving infrastructure.

\item On LM1B and OpenWebText, K-Forcing can generate up to 4 tokens per forward pass and achieves substantial
throughput improvements across low-, medium-, and high-load batch regimes,
with speedups of up to approximately $3\times$ at modest quality degradation relative
to its AR teacher. Generating fewer tokens per pass provides a smooth,
tunable quality--speed trade-off. To our knowledge, this is the first empirical demonstration that a
statistical-modeling change alone can yield substantial batch-serving throughput
gains over AR decoding.
\end{itemize}

\section{Why Existing Approaches Struggle in Batch Serving}
\label{sec:2:analysis}
We analyze two major families of modeling-level approaches to accelerating
AR inference---draft-then-verify methods and diffusion language models---and
identify the challenges they face in improving throughput under high-load
batch-serving settings.

\subsection{Draft-then-Verify Methods}
\label{sec:2:relate:spec}

Draft-then-verify methods, most prominently speculative
decoding~\citep{chen2023accelerating}, accelerate AR inference by using a
lightweight draft model to propose multiple candidate tokens, which are then
verified in parallel by the target model. Because verification uses rejection
sampling against the target distribution, these methods are \emph{lossless} in
principle. The key challenge is to make the draft mechanism both accurate and
cheap. Representative approaches include feature-level drafting (EAGLE~\citep{10.5555/3692070.3693232}), auxiliary multi-token heads
attached to the target model
(Medusa~\citep{10.5555/3692070.3692273}; Hydra~\citep{ankner2024hydra}), and
noise-to-sequence mappings that better match the joint multi-token
distribution for higher acceptance rates~\citep{draxler2025parallel}.

Speculative decoding has proven effective at reducing \emph{single-request
latency} in interactive, low-batch settings, but its variable-length
acceptance pattern poses a fundamental challenge for \emph{throughput-bound
batch serving}. As \cite{zhang2025batch} describe, the \emph{ragged tensor
problem} arises because the number of accepted tokens $a_i$ varies across
requests, desynchronizing position IDs, attention masks, and KV-cache states.
The system must then either pad to the maximum accepted length, wasting
compute, or perform cross-batch realignment~\citep{zhang2025batch}, whose
overhead grows with batch size. Under high serving load, this issue becomes
more pronounced: drafting and verification add extra forward passes, while
variable accepted lengths prevent a proportional reduction in synchronized
decoding steps. \cite{liu2026speculativedecodingperformanceillusion} systematically evaluate
speculative decoding on a production-grade inference engine and confirm that
speedups degrade consistently as batch size grows, because verification of
rejected tokens dominates execution time. As they and
\cite{kumar2026speculative} conclude, speculative decoding is often
ineffective under compute-bound scenarios with large batch sizes, where
throughput gains can be marginal or even negative.

\subsection{Diffusion Language Models and Multi-Token Prediction}
\label{sec:2:relate:diffusion}

Diffusion language models (DLMs)~\citep{austin2021structured,10.5555/3692070.3693403,sahoo2024simple,nie2025large,arriola2025block}
have emerged as a promising alternative to AR language models. Among existing
formulations, Masked Diffusion Language Models
(MDLMs)~\citep{sahoo2024simple,nie2025large} offer a clean framework for
discrete diffusion over text; we briefly summarize their training objective
and inference procedure below.

\textbf{Training.}
MDLM defines a \emph{forward process} that independently masks each token in a
clean length-$L$ sequence $x_0$ with probability $t \in [0,1]$, producing a
partially masked sequence $x_t$. A mask predictor
$p_\theta(\cdot \mid x_t)$, parameterized by a bidirectional Transformer, is
trained to recover all masked tokens via a cross-entropy loss computed only on
the masked positions:
\begin{align} 
\label{eq:mdlm-objective}
    \mathcal{L}(\theta) \triangleq
    -\mathbb{E}_{t,\,x_0,\,x_t}
    \left[
    \frac{1}{t}\sum_{i=1}^{L}
    \mathbf{1}[x_t^i = \textrm{M}]
    \log p_\theta(x_0^i \mid x_t)
    \right].  
\end{align}

\textbf{Inference.}
Starting from a
fully masked sequence at $t=1$, the generation process is discretized into $T$ steps with
schedule $1=t_T>t_{T-1}>\cdots>t_0=0$. At each step from $t$ to $s<t$, the
mask predictor predicts all masked tokens in parallel, and a fraction $s/t$ of
the predicted tokens are remasked to obtain $x_s$, ensuring consistency with
the forward process. In practice, works such as
LLaDA~\citep{nie2025large} often use confidence-sorted decoding, retaining the
highest-confidence predictions at each step. The number of steps $T$ controls
the quality--efficiency trade-off.

Despite the appeal of parallel prediction, MDLMs face a fundamental limitation
that directly constrains their ability to reduce NFEs (number of forward
passes). Because the objective in \eqref{eq:mdlm-objective} trains
per-position \emph{marginal} predictors, unmasking multiple positions
simultaneously samples them independently rather than from their joint
conditional distribution. This means that to preserve the target distribution,
MDLMs must still unmask tokens essentially one at a time---yielding no
reduction in NFEs over AR decoding. We formalize this below.

\begin{theorem}[NFE lower bound for lossless sampling from MDLMs]
\label{thm:nfe-lower-bound}
Let $p$ be the target data distribution of the clean sequence $x_0$ in
\eqref{eq:mdlm-objective}, supported on
$(x_1,\dots,x_L)\in\mathcal{V}^L$. Suppose $p$ is
\emph{conditionally irreducible}: for any subset $S$ with $|S|\geq2$, any
nontrivial partition $S=S_1\sqcup S_2$, and any $x_{\bar S}$ with
$p(x_{\bar S})>0$, we have
$p(x_S\mid x_{\bar S})\neq
p(x_{S_1}\mid x_{\bar S})p(x_{S_2}\mid x_{\bar S})$, where $\bar S$ is the
complement of $S$. Then, even with a Bayes-optimal MDLM mask predictor, i.e., an optimal solution
to \eqref{eq:mdlm-objective}, unmasking $K>1$ tokens in parallel with the
model yields a distribution different from $p$. Hence, lossless MDLM sampling requires at least $L$ NFEs, i.e., one
token per NFE.
\end{theorem}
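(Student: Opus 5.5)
The plan is to characterize the Bayes-optimal mask predictor first, then show that any parallel unmasking step factorizes across the revealed positions, which conditional irreducibility rules out.

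\textbf{Step 1: the optimal predictor is the true conditional marginal.} Fix $t\in(0,1]$. The objective \eqref{eq:mdlm-objective} is a sum over positions of expected cross-entropies. For each masked position $i$ and each partially masked input $x_t$ with unmasked set $\bar S$ (so $S$ is the masked set), the inner minimization over the distribution $p_\theta(\cdot\mid x_t)$ is a proper-scoring-rule problem. Its unique minimizer is therefore
\[
p_\theta(x^i\mid x_t)=p(x^i\mid x_{\bar S})
\]
wherever $p(x_{\bar S})>0$. Since every masking pattern has positive probability under the forward process, this holds for all such $x_t$. I would note that the predictor outputs only one distribution per position: it has no access to the other masked positions' sampled values.

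\textbf{Step 2: the parallel update factorizes.} Consider an unmasking step that reveals a set $U\subseteq S$ with $|U|=K\ge2$, given revealed context $x_{\bar S}$. Whatever rule selects $U$ (random remasking or confidence ordering), the selection depends only on the predictor outputs, which are functions of $x_{\bar S}$. The revealed tokens are drawn independently from the per-position outputs. Hence, conditional on $x_{\bar S}$ and $U$, the step produces
\[
q(x_U\mid x_{\bar S})=\prod_{i\in U}p(x^i\mid x_{\bar S}).
\]
For lossless sampling, the joint law of $x_U$ given $x_{\bar S}$ must equal $p(x_U\mid x_{\bar S})$. Subsequent steps can only generate the remaining positions given the already-fixed values, so they cannot correct the law of $x_U$. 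Formally, I would marginalize the final output distribution onto $(x_{\bar S},x_U)$ and compare.

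\textbf{Step 3: contradiction via irreducibility.} Pick any $i\in U$ and apply conditional irreducibility to the set $U$ (with $\bar U\supseteq \bar S$). The issue is that conditional irreducibility is stated with conditioning on the full complement $\bar U$, whereas the sampler conditions only on $x_{\bar S}$. I expect this mismatch to be the main obstacle. The cleanest handling is to apply the hypothesis to $U$ with the remaining masked variables marginalized out. That is not literally covered if $\bar S\neq\bar U$, so I would instead argue with a specific run.

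\begin{itemize}
\item Use the first step at which $K\ge2$ tokens are revealed, and suppose toward contradiction that the sampler is lossless.
\item Before that step only single tokens were revealed. The single-token steps sample exact conditionals, so the law of the revealed prefix is exact.
\item Now strengthen the argument by considering the last step instead. There every position in $S$ is being revealed, so $U=S$ and conditioning is on the full complement $\bar S=\bar U$.
\item In the lossless scenario the context law is correct. Splitting $U=\{i\}\sqcup(U\setminus\{i\})$, the product form contradicts irreducibility for some context $x_{\bar S}$ with $p(x_{\bar S})>0$, which is reached with positive probability.
\end{itemize}

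If the last step reveals only one token, I would use induction backwards: show that losslessness forces every step to reveal one token. Suppose the final step is single-token and correct, and peel it off. The preceding prefix law must then match the marginal of $p$, but marginals of irreducible distributions need not be irreducible. I would handle this by strengthening the hypothesis reading: interpret the theorem for the step where $U$ equals the set of all currently masked positions, or note that the paper's condition is intended to cover marginal conditionals. This subtlety is where I would spend the most care.

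\textbf{Conclusion.} Given the above, no lossless step reveals more than one token. Hence at least $L$ NFEs are needed.
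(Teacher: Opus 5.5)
Your Steps 1--2 follow the paper's argument. The Bayes-optimal predictor is the conditional marginal given the revealed set $A$ (your $\bar S$), and a parallel step draws $x_U$ from $\prod_{i\in U}p(x_i\mid x_A)$. The gap is the one you flag in Step 3, and it is fatal. When other positions stay masked, the hypothesis never constrains the marginal $p(x_U\mid x_A)$. It only constrains conditionals given the \emph{full} complement of the set in question, such as $p(x_U\mid x_{\bar U})$, or $p(x_{\mathcal M}\mid x_A)$ with $\mathcal M$ the whole masked set. Your last-step device only shows that the final step must reveal a single token. The backward induction would need marginals of $p$ to stay irreducible, and they need not.

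No argument can close this, because the statement is false under its hypothesis. Take $L=3$, $\mathcal V=\{0,1\}$, and $p$ uniform on $\{000,011,101,110\}$. Given any one coordinate, the other two determine each other. Also $p(x_i)\,p(x_j,x_k)=1/8$ for every string, which is not $p$. So $p$ is conditionally irreducible exactly as defined. Yet any two coordinates are marginally independent fair bits. Unmasking positions $1,2$ in parallel from the all-mask state therefore draws $(x_1,x_2)$ exactly from $p(x_1,x_2)$, and the second step sets $x_3=x_1\oplus x_2$. The Bayes-optimal MDLM is thus lossless with $2<L$ NFEs, whichever pair it unmasks first.

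The paper's own proof has the same hole; there is no trick you missed. Its Step 2 invokes irreducibility for $S=\{j_1,\dots,j_K\}$ but states the conclusion conditioned on $x_{\mathcal U}$. That is the complement of $S$ only when $S$ is the entire masked set, which is precisely the substitution you declined to make. Its Step 3 also never argues that later steps cannot undo the error, which your Step 2 does.

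Your proposed repair is the right one: assume $p(x_U\mid x_A)\neq\prod_{i\in U}p(x_i\mid x_A)$ for all disjoint $A,U$ with $|U|\ge 2$ and $p(x_A)>0$. Then no induction is needed, provided the set to unmask is a deterministic function of the revealed values. Take the first parallel step on any path. The output event $\{X_A=x_A\}$ coincides with following that path, so it has probability $p(x_A)$. The step gives $x_U$ the product law, and later steps cannot change the output marginal on $A\cup U$.

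Your claim in Step 2 that this covers any selection rule is too strong, for two reasons.
\begin{itemize}
\item Ranking positions by the probability of the token actually sampled there, as in the paper's MDLM baseline, makes the selection depend on the samples. Your product formula then no longer describes the kept tokens.
\item For uniformly random positions, even the marginal hypothesis does not suffice. Let $p$ be uniform on the six non-constant strings in $\{0,1\}^3$; it satisfies the marginal condition. Each fixed pair-first schedule is wrong, yet unmasking a uniformly random pair and then the last token reproduces $p$ exactly in two NFEs, because the three schedules' errors cancel in the mixture.
\end{itemize}
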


We remark that Theorem~\ref{thm:nfe-lower-bound} does not contradict
\cite{jiang2025diffusion}, who show that MDLMs can reduce NFE relative to AR
for \emph{conditionally reducible} distributions. Moreover,
\cite{nie2025large} show that masked-diffusion language models can achieve
reasonable quality with fewer than $L$ NFEs in practice, suggesting that text
data in certain domains exhibits partial conditional reducibility. The point
of Theorem~\ref{thm:nfe-lower-bound} is to give a \emph{worst-case}
limitation of MDLM-style marginal unmasking: because the sampler cannot know
a priori which positions are conditionally independent, unmasking multiple
tokens per NFE can deviate from the target joint distribution. We provide the
proof and an illustrative example in Appendix~\ref{app:nfe-lower-bound}.

\textbf{Connection to MTP.}
This marginal-sampling issue is not unique to MDLMs. Standard multi-token
prediction (MTP) methods~\citep{liu2024deepseek,10.5555/3692070.3692699}
face the same problem: they train auxiliary heads to predict several future
tokens from the same prefix, but the predictions are separate per-position
marginals rather than a joint sample from the future-token block.
Consequently, MTP heads are typically useful as auxiliary training signals but
are discarded at inference, where generation remains autoregressive. Thus,
both MDLMs and standard MTP expose the same core limitation: \emph{parallel
prediction alone does not provide a reliable joint multi-token sampler.}

In summary, our analysis identifies two requirements for modeling-level
batch-serving acceleration:
\begin{enumerate}[leftmargin=*, itemsep=2pt, topsep=4pt]
    \item \textbf{Fixed-length outputs} per forward pass, to preserve batching regularity;
    \item \textbf{Joint multi-token sampling}, to avoid the degradation caused by independent marginals.
\end{enumerate}
We next introduce K-Forcing, a paradigm designed to satisfy both requirements.


\section{Learning Push-Forward Language Model with K-Forcing}
\label{sec:3:push}

The requirements of fixed-length outputs and joint multi-token sampling raise
a central question: \emph{how can we build a generative model that produces a
joint sample of $k$ discrete tokens in a single NFE?}

Explicitly modeling this joint distribution is impractical. Given a vocabulary
$\mathcal{V}$, an AR model represents the next-token distribution
$p(x_{t+1}\mid x_{\leq t})$ with a $|\mathcal{V}|$-dimensional logit vector.
Extending this representation to the joint distribution of the next $k$
tokens, $p(x_{t+1},\dots,x_{t+k}\mid x_{\leq t})$, would require a table over
$|\mathcal{V}|^k$ outcomes, which is prohibitively large even for $k{=}2$.

A viable alternative, inspired by implicit generative models such as
GANs~\citep{goodfellow2020generative} and diffusion-style
generators~\citep{song2021scorebased,10.5555/3618408.3619743}, is to learn a
\emph{push-forward mapping} that transforms simple noise into joint token
samples. Specifically, if a noise vector $\mathbf{z}$ is drawn from an
easy-to-sample base distribution $\mu$ and $G$ is a deterministic map, then
the distribution of $G(\mathbf{z})$ is the push-forward of $\mu$ by $G$,
denoted $G_{\#}\mu$~\citep{peyre2019computational}. Learning such a model
means learning $G$ so that $G(\mathbf{z})$ follows the desired target
distribution. 

We instantiate this idea for language modeling and call the resulting framework \emph{K-Forcing}: it learns a push-forward language model that maps $k$ noise variables to the next $k$ tokens in a single forward pass.

\subsection{Formulation of Push-Forward Language Model}
\label{sec:3:push:formulation}

A \emph{push-forward language model} (PFLM) with prediction window $k$
implicitly defines the joint conditional distribution over the next $k$
tokens through a deterministic map
$G_\theta : \mathcal{V}^{t} \times [0,1]^k \to \mathcal{V}^k$.
Given a context $x_{\leq t}$ and $k$ i.i.d.\ noise variables
$\mathbf{z}=(z_1,\dots,z_k)$ with $z_i\sim\mathrm{Uniform}(0,1)$, the map
produces $k$ future tokens in one shot:
$G_\theta(x_{\leq t},\mathbf{z})=(\hat{x}_{t+1},\dots,\hat{x}_{t+k})$.
Unlike an AR language model, PFLM does not explicitly enumerate the joint
likelihood over $\mathcal{V}^k$; instead, one samples from the joint
distribution by drawing $\mathbf{z}$ and evaluating $G_\theta$. 

\textbf{Existence of a push-forward mapping.}
Given access to an autoregressive oracle of the target distribution $p(x)$,
e.g., a well-trained AR model, one can construct a closed-form push-forward
mapping $G^\star$ that maps $k$ independent uniform noise variables to a joint
sample of the next $k$ tokens. Let $q_{\mathrm{AR}}(\cdot\mid x_{\leq t})$
denote the oracle's next-token distribution given context $x_{\leq t}$, and let
$F_{\mathrm{AR}}(v \mid x_{\leq t}) = \sum_{v' \leq v} q_{\mathrm{AR}}(v' \mid x_{\leq t})$
be its cumulative distribution function (CDF). The inverse CDF (quantile function)
$F_{\mathrm{AR}}^{-1}(\cdot \mid x_{\leq t})$ maps a uniform random variable
$z \sim \mathcal{U}[0,1)$ to a token $v$ such that
$F_{\mathrm{AR}}^{-1}(z \mid x_{\leq t}) = \min\{v : F_{\mathrm{AR}}(v \mid x_{\leq t}) \geq z\}$.
Given the noise vector $\mathbf{z}$, we recursively generate the next $k$ tokens as
\begin{equation}
  \hat{x}_{t+j}
  =
  F_{\mathrm{AR}}^{-1}
  \bigl(z_j \mid x_{\leq t}, \hat{x}_{t+1:t+j-1}\bigr),
  \qquad j=1,\dots,k .
  \label{eq:main-sequential-icdf}
\end{equation}

Unrolling this recursion gives the closed-form map
$G^\star(x_{\leq t},\mathbf{z})=(\hat{x}_{t+1},\dots,\hat{x}_{t+k})$.
This shows that the PFLM formulation is expressive enough, in principle, to
reproduce the joint conditional distribution of an AR teacher without
explicitly parameterizing the $|\mathcal{V}|^k$ joint probability table.
For completeness, we provide a rigorous analysis showing that
\eqref{eq:main-sequential-icdf} is equivalent to AR sampling, together with
an illustrative example, in Appendix~\ref{app:existence}.

\textbf{Comparison with existing inference paradigms.}
Figure~\ref{fig:paradigm} summarizes the contrast between K-Forcing, which trains a PFLM, and existing
paradigms within a single NFE. AR advances one token at a time, speculative
decoding yields variable-length accepted outputs, and MDLM samples multiple
positions from per-position marginals. In contrast, K-Forcing uses PFLM to map i.i.d.\ noise to
a fixed-length joint sample of $k$ future tokens.

\subsection{Supervision Strategy}
\label{sec:3:push:training}

We have shown that a closed-form push-forward mapping can be obtained by
unrolling the AR sampling process in \eqref{eq:main-sequential-icdf}.
Therefore, a natural way to learn such a map in practice is
\emph{distillation}: we use the AR teacher to construct supervised
noise--token pairs $(\mathbf{z},\hat{\mathbf{x}})$, where each noise vector
$\mathbf{z}$ is paired with a teacher-generated token block
$\hat{\mathbf{x}}$. The PFLM student is then trained to match this mapping by
minimizing the discrepancy between its prediction
$G_\theta(x_{\leq t},\mathbf{z})$ and the teacher target
$\hat{\mathbf{x}}$.

For each training position $t\in\mathcal{T}$, let
$\mathbf{z}^{(t)}=(z^{(t)}_1,\dots,z^{(t)}_k)$ denote the noise vector
associated with context $x_{\leq t}$. We parameterize the student to
produce categorical distributions
$p_{\theta,j}(\cdot \mid x_{\leq t},\mathbf{z}^{(t)})$ for
$j=1,\dots,k$, where each distribution predicts the $j$-th future token.
These distributions parameterize the map $G_\theta$: at inference time, the
$j$-th output token is obtained by greedy decoding from
$p_{\theta,j}(\cdot \mid x_{\leq t},\mathbf{z}^{(t)})$. In this way, we can train
the student with the \emph{next-$k$-token prediction loss}, using the
standard negative log-likelihood (NLL) averaged over all training positions in
$\mathcal{T}$ and future offsets:
\begin{equation}
  \mathcal{L}_{\mathrm{PFLM}}(\theta)
  =
  -\frac{1}{|\mathcal{T}|k}
  \sum_{t\in\mathcal{T}}\sum_{j=1}^{k}
  \log p_{\theta,j}\!\bigl(
    \hat{x}_{t+j} \mid x_{\leq t},\,\mathbf{z}^{(t)}
  \bigr).
  \label{eq:pflm-loss}
\end{equation}

Importantly, under the inverse-CDF construction in
\eqref{eq:main-sequential-icdf}, the teacher-generated target block
$\hat{\mathbf{x}}$ is deterministic once the context $x_{\leq t}$ and
noise vector $\mathbf{z}^{(t)}$ are fixed. Therefore, if the noise--token
pairs are constructed ideally and the student has sufficient capacity, the
\emph{next-$k$-token prediction loss} in \eqref{eq:pflm-loss} can in principle be minimized arbitrarily small. 

The key question is therefore:
\emph{how can we construct noise--token pairs
$(\mathbf{z},\hat{\mathbf{x}})$ from an AR teacher as ideally as possible?}

\subsubsection{Baseline: Noise Inversion}
\label{sec:3:push:training:baseline}

A natural baseline is \emph{noise inversion}, proposed by
\cite{draxler2025parallel} to train a noise-conditioned multi-token drafter
for speculative decoding. Given a real-data sequence $(x_1,\dots,x_T)$, noise
inversion recovers, for each token $x_{t+1}$, a noise value
$z_t^*\in[0,1]$ that would generate this token under inverse-CDF sampling from
the AR teacher. Specifically, let the CDF bin of token $x_{t+1}$ be
$[l_t,u_t)$, where
$l_t=\sum_{v < x_{t+1}} q_{\mathrm{AR}}(v\mid x_{\leq t})$ and
$u_t=l_t+q_{\mathrm{AR}}(x_{t+1}\mid x_{\leq t})$. Noise inversion samples
$z_t^*\sim\mathrm{Uniform}(l_t,u_t)$, so applying the teacher's inverse-CDF
sampler to $z_t^*$ recovers $x_{t+1}$. In this way, a single AR teacher
forward pass over the real-data sequence constructs paired noise--token
examples $(z_t^*,x_{t+1})$ for all positions $t$, making noise inversion
convenient and efficient.

Although simple, noise inversion has two critical limitations:

\textbf{Train--inference mismatch.}
The recovered noise is uniform only when the target tokens are sampled from the AR teacher. With real training data, tokens may lie in low-probability regions of the teacher distribution, so the inverted noise concentrates near the edges of narrow CDF bins rather than spreading uniformly over $[0,1]$. Since the PFLM student receives genuinely uniform noise at inference time, this creates a systematic train--inference mismatch that degrades generation quality.

\textbf{Numerical fragility.}
A more serious issue is that noise inversion is numerically fragile. For low-probability tokens, the corresponding CDF bin $[l_t, u_t)$ is extremely narrow, meaning that even tiny perturbations in the teacher logits or in the cumulative sum used to compute CDF boundaries can shift the recovered noise into an adjacent bin, producing a different token entirely. Such perturbations are difficult to avoid in modern GPU execution: cuBLAS does not guarantee bitwise reproducibility across different batch sizes~\citep{deepseekv4}, and attention kernels may use non-deterministic floating-point accumulation orders~\citep{he2025nondeterminism,deepseekv4}. In practice, we observe that reapplying inverse-CDF sampling to the inverted noise often fails to recover the original token, confirming that the round-trip is not reliably invertible. These issues explain why \cite{draxler2025parallel} rely on speculative verification to filter incorrect predictions; consequently, their method inherits the variable-length decoding and irregular batching challenges of draft-then-verify approaches.

\subsubsection{K-Forcing: Progressive Self-Forcing Distillation}
\label{sec:3:push:training:selfforcing}

To address the limitations of noise inversion, we propose K-Forcing, which trains PFLM via \emph{progressive
self-forcing distillation}. This approach is inspired by self-forcing techniques for training
autoregressive video diffusion models~\citep{huang2025self}, which perform autoregressive self-rollout for causal video models during training so that the model learns from its own predictions rather than ground-truth context.

\textbf{Stage 1: Forward Distillation (AR $\to$ PFLM($k=1$)).}
Rather than inverting ground-truth tokens back into noise, we run the AR
teacher \emph{forward}. For each context $x_{\leq t}$, we sample
$z\sim\mathrm{Uniform}(0,1)$ and use the teacher's inverse-CDF sampler to
generate
$\hat{x}_{t+1}=F_{\mathrm{AR}}^{-1}(z\mid x_{\leq t})$.
The PFLM($k{=}1$) student is trained to predict $\hat{x}_{t+1}$ from
$(x_{\leq t},z)$. This construction removes the train--inference mismatch
because the noise is uniform by construction, and avoids recovering noise from
narrow CDF bins. We use this forward distillation stage only to bootstrap a
reliable PFLM with $k{=}1$.

\textbf{Stage 2: Self-forcing Distillation (PFLM($k$) $\to$ PFLM($2k$)).}
Once a PFLM with window $k$ is trained, we use it as the teacher to distill a
new PFLM student with window $2k$. Given context $x_{\leq t}$ and noise
$\mathbf{z}=(z_1,\dots,z_{2k})$, the PFLM($k$) teacher first generates
$(\hat{x}_{t+1},\dots,\hat{x}_{t+k})$ in one forward pass using
$(z_1,\dots,z_k)$. In a second forward pass, it conditions on the extended
context $(x_{\leq t},\hat{x}_{t+1},\dots,\hat{x}_{t+k})$ and the remaining
noise $(z_{k+1},\dots,z_{2k})$ to generate
$(\hat{x}_{t+k+1},\dots,\hat{x}_{t+2k})$. The PFLM($2k$) student is then
trained with the full noise vector $\mathbf{z}$ as input and the concatenated
sequence $(\hat{x}_{t+1},\dots,\hat{x}_{t+2k})$ as its target.

\textbf{Progressive window expansion.}
Stage 2 is applied repeatedly: starting from the bootstrapped PFLM($k{=}1$), we
run self-forcing distillation with $k=1\to2\to4$, progressively doubling the
prediction window at each stage. This enables K-Forcing to scale to large $k$
while maintaining stable supervision throughout the distillation chain.

Unlike noise inversion, self-forcing directly addresses both limitations
identified above. First, because the noise is always sampled from
$\mathrm{Uniform}(0,1)$ and never inverted from real data, there is no
train--inference distribution mismatch. More importantly, self-forcing
greatly alleviates numerical fragility after the bootstrap stage: the teacher
maps noise to tokens through its learned push-forward network without requiring inversion through narrow CDF bins, so the supervision is robust to floating-point
non-determinism.

\begin{figure}[htbp]
  \centering
  \includegraphics[width=0.7\textwidth]{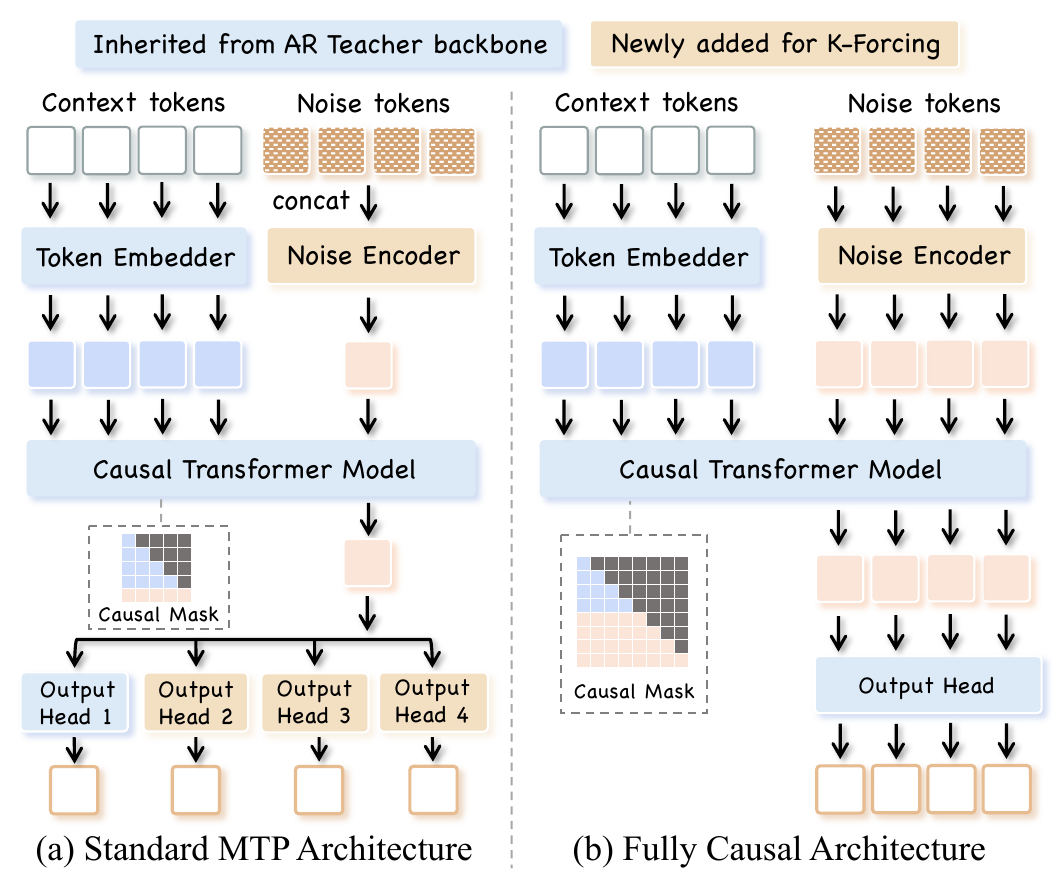}
  \caption{
  \textbf{(a) Standard MTP}: $k$ noise variables are concatenated into one
  token; $k$ independent heads decode future tokens from its hidden state.
  \textbf{(b) Fully causal}: each noise variable forms a separate token under
  causal attention and is decoded by a shared head. Both reuse the AR backbone
  with a sinusoidal+MLP noise encoder.}
  \label{fig:architecture}
\end{figure}

\subsection{Architecture Design}
\label{sec:3:push:arch}

We consider two architectures for parameterizing the push-forward mapping
$G_\theta$, as illustrated in Figure~\ref{fig:architecture}. Both designs
reuse the pretrained AR backbone and inject noise through a sinusoidal+MLP
encoder, differing only in how noise variables are organized within the
sequence.

\textbf{Standard MTP architecture.}
As a natural baseline, we extend the standard MTP-style
architecture~\citep{liu2024deepseek,10.5555/3692070.3692699} by concatenating
all $k$ noise variables into a single noise-conditioning token. From the
hidden state of this token, $k$ independent prediction heads decode the
future tokens in parallel. This approach is simple to implement, requiring only a noise encoder and $k$ independent prediction heads on top of the shared backbone.

\textbf{Fully causal architecture.}
We instead propose a \emph{fully causal} architecture that represents each
noise variable as a separate token under causal attention, decoded by a shared
prediction head. This design mirrors the inverse-CDF construction in
\eqref{eq:main-sequential-icdf}: the $j$-th future token depends only on the
context and prefix noise variables $(z_1,\dots,z_j)$, not on future noise
$(z_{j+1},\dots,z_k)$. Each future token is thus conditioned on its own
dedicated noise variable, avoiding the information bottleneck of routing all
$k$ stochastic decisions through a single shared latent. This causal structure
provides a natural inductive bias for learning the push-forward mapping and
allows the model to reuse the AR backbone without additional MTP heads.

\subsection{Practical Considerations}
\label{sec:3:push:impl}

\textbf{Compatibility with AR serving infrastructure.} K-Forcing preserves the fixed-length, synchronized structure required by KV-cache reuse and continuous batching. The inference procedure mirrors standard AR KV-cache decoding: at each step, the model consumes a fixed number of input tokens, produces exactly $k$ output tokens, and appends their KV entries to the cache. This regular, fixed-stride structure ensures that all requests in a batch remain synchronized in position indices and cache lengths, making the scheme directly compatible with continuous batching schedulers without cross-request realignment or padding. The prediction window can also be varied at inference time, using any $k \leq k_{\mathrm{train}}$, to trade off speed and quality without retraining. We provide the full inference algorithm in Appendix~\ref{app:3:implementation:kvcache}. 

\textbf{Training cost.} The main training cost comes from self-forcing distillation, which requires two sequential teacher forward passes per context position to produce the $2k$-token targets, plus one student forward pass. To implement this efficiently, each of the two consecutive teacher passes is realized as a batched attention call with structured attention masks that encode the causal dependencies between context, predicted, and noise tokens within a single forward pass. Theoretically, these masks are block-sparse with $O(k)$ non-zero entries relative to standard AR attention when $N \gg k$, so training costs $O(k)\times$ a single AR forward pass. However, our current implementation does not yet exploit this sparsity and passes the masks as dense matrices to FlashAttention~\citep{dao2023flashattention2}, resulting in $O(k^2)\times$ AR cost in practice. Developing custom kernels that exploit the block-sparse structure of the masks to close this gap is left for future work; we discuss the required design in Appendix~\ref{app:3:implementation:training}. 

\textbf{Temperature-controlled generation.} K-Forcing can be extended  to support temperature control by augmenting the mapping as $G_\theta(x_{\leq t},\mathbf{z},\tau)$. During training, a temperature $\tau$ is sampled uniformly from $[\tau_{\min}, \tau_{\max}]$ and used both to sharpen or flatten the teacher output distribution and to condition the noise encoder, which learns to associate different $\tau$ values with different diversity levels. At inference, varying $\tau$ smoothly interpolates between near-greedy outputs (low $\tau$) and diverse samples (high $\tau$)---no retraining or model modification is required. We detail this extension in Appendix~\ref{app:3:implementation:training}.  


\section{Experiments}
\label{sec:4:exp} 

We evaluate K-Forcing through three main experiments: its quality--throughput
trade-off in batch serving (Section~\ref{sec:4:exp:quality}), ablations of
the supervision strategy and  architecture design
(Section~\ref{sec:4:exp:ablation}), and a quality--NFE comparison with
existing language-model inference paradigms
(Section~\ref{sec:4:exp:bz1}).
Additional results on temperature-controlled generation are provided in
Appendix~\ref{app:4:experiment:temperature}.

\textbf{Datasets and architecture.}
We evaluate K-Forcing on two standard language modeling benchmarks, LM1B
\citep{chelba2013one} and OpenWebText (OWT)
\citep{gokaslan2019openwebtext}. Our setup largely follows MDLM
\citep{sahoo2024simple}: we use the same dataset preprocessing, tokenizers,
context lengths, and Transformer backbone. Specifically, the context length is
128 for LM1B and 1{,}024 for OWT, and all models use the same 12-layer,
$\sim$100M-parameter Transformer backbone. For the AR teacher, we use the
 AR checkpoint on OWT released by \citep{sahoo2024simple}, trained for 1M steps with  batch
size 512, and train our own AR teacher on LM1B for 500K steps with the same
 batch size. Remaining hyperparameters and implementation details are
provided in Appendix~\ref{app:4:experiment:config}.

\textbf{K-Forcing training.}
By default, K-Forcing trains a PFLM with the fully causal architecture using
progressive self-forcing distillation, following the sequence
AR $\to$ PFLM($k{=}1$) $\to$ PFLM($k{=}2$) $\to$ PFLM($k{=}4$).
At each stage, the student is initialized from the previous-stage teacher and
then trained on the same dataset for 500K steps with batch size 512. All
training experiments are conducted on a single pod with 8 H100 GPUs. For
distillation and sampling-based evaluation, we take the AR teacher
distribution at temperature $\tau=1.0$ as the target distribution.

\subsection{Joint Multi-Token Prediction for Batch Serving}
\label{sec:4:exp:quality} 

We first evaluate whether K-Forcing can produce an effective joint multi-token
predictor. Since the resulting PFLM is trained to match the joint future-token distribution
of its AR teacher, comparable generation quality to AR would indicate that
the learned push-forward mapping captures the desired joint distribution. Using
the default training recipe described above, we evaluate the final PFLM($k{=}4$) checkpoint at
$k\in\{2,3,4\}$ by varying the number of input noise tokens at decoding time.

\textbf{Inference protocol.}
We evaluate batch-serving throughput on fixed-length completion tasks. 
For each dataset, we sample 1{,}024 held-out prefixes, completing 6-token 
prefixes to 128 tokens on LM1B and 64-token prefixes to 1{,}024 tokens on OWT. 
Throughput is measured under bf16 precision for both AR and K-Forcing with KV-cache 
enabled, at batch sizes of 4, 16, and 128 on a single H100 GPU, corresponding 
to low-, medium-, and high-load regimes; the largest size saturates GPU 
utilization in our setup. We report the total number of \emph{generated tokens} 
(including special tokens) divided by wall-clock decoding time, averaged over 
5 runs after one warm-up. Attention is computed via the FlashAttention-v2 
kernel~\citep{dao2023flashattention2}.

\textbf{Generation-quality metrics.}
Since K-Forcing is an implicit sampler rather than an explicit likelihood model,
we evaluate generation quality using sample-based metrics. First, we report
\emph{generative perplexity} (Gen-PPL), computed by scoring the generated
completions with an external GPT-2-large evaluator~\citep{radford2019language}. 
For LM1B, generated samples are re-tokenized before GPT-2-large scoring
because of different tokenizers. \emph{To ensure fair quality assessment, 
we truncate each generated sequence at the first end-of-sequence token 
before 
computing the metrics.} 
Second, we conduct an \emph{LLM-as-a-judge} evaluation as an affordable
proxy for human preference assessment. Using the same prefixes, we use a
locally-served Qwen3.5-27B~\citep{qwen35blog} to perform pairwise comparisons
between each \emph{truncated} K-Forcing completion and the AR completion generated
from the same prefix; we use a local model for full reproducibility. 
The judge is forced to choose one completion based on coherence, fluency,
and naturalness, and we report the win rate of each K-Forcing variant against AR.
The full judge prompt and evaluation protocol details are provided in
Appendix~\ref{app:4:experiment:eval}.

\begin{table}[t]
\centering
\caption{Quality--throughput trade-off for K-Forcing on LM1B and OWT.
K-Forcing($k$) denotes decoding with $k$ noise tokens from the PFLM($k{=}4$) checkpoint trained by K-Forcing.
Throughput is reported in \textbf{k tokens/s}, with speedup over AR shown in
parentheses.}
\label{tab:quality_throughput}
\begin{tabular}{l l c c c c c}
\toprule
& & \multicolumn{2}{c}{Quality} & \multicolumn{3}{c}{Throughput (k tokens/s, speedup)} \\
\cmidrule(lr){3-4}\cmidrule(lr){5-7}
Dataset & Method & Gen-PPL\,$\downarrow$ & Win\,vs.\,AR\,$\uparrow$ & BS=4 & BS=16 & BS=128 \\
\midrule
\multirow{4}{*}{LM1B}
& AR             & 104.8 & -- & 0.54\;($1.00\times$) & 2.03\;($1.00\times$) & 15.4\;($1.00\times$) \\
& K-Forcing ($k{=}2$) & 107.2 & 50.2\% & 0.84\;($1.56\times$) & 3.48\;($1.71\times$) & 24.7\;($1.60\times$) \\
& K-Forcing ($k{=}3$) & 117.1 & 45.1\% & 1.27\;($2.35\times$) & 4.99\;($2.46\times$) & 35.6\;($2.31\times$) \\
& K-Forcing ($k{=}4$) & 127.6 & 42.9\% & 1.69\;($3.13\times$) & 6.77\;($3.33\times$) & 46.5\;($3.02\times$) \\
\midrule
\multirow{4}{*}{OWT}
& AR             & 42.64 & -- & 0.53\;($1.00\times$) & 1.99\;($1.00\times$) & 9.61\;($1.00\times$) \\
& K-Forcing ($k{=}2$) & 32.82 & 46.9\% & 0.85\;($1.60\times$) & 3.36\;($1.69\times$) & 11.9\;($1.24\times$) \\
& K-Forcing ($k{=}3$) & 29.67 & 42.8\% & 1.28\;($2.42\times$) & 5.03\;($2.53\times$) & 17.5\;($1.82\times$) \\
& K-Forcing ($k{=}4$) & 24.97 & 39.4\% & 1.70\;($3.21\times$) & 6.91\;($3.47\times$) & 22.9\;($2.38\times$) \\
\bottomrule
\end{tabular} 
\end{table}

\textbf{Results.}
Table~\ref{tab:quality_throughput} shows that K-Forcing achieves large
batch-serving speedups with moderate quality degradation relative to the AR
teacher. K-Forcing($k{=}4$) reaches roughly $3\times$ speedup on LM1B and
$2.4$--$3.5\times$ speedup on OWT across different batch sizes,
showing consistent gains from latency-bound to compute-bound regimes.
Varying $k$ yields a smooth quality--throughput trade-off:
K-Forcing($k{=}2$) is nearly indistinguishable from AR (win rate $50.2\%$
on LM1B, $46.9\%$ on OWT) at $\sim\!1.6\times$ speedup, while
K-Forcing($k{=}3$) sits in between at $\sim\!2.4\times$ with win rates
above $42\%$. This lets practitioners pick $k$ to match their
latency--quality budget. The OWT Gen-PPL results also highlight why
the LLM-as-a-judge metric matters: K-Forcing reports a \emph{lower}
Gen-PPL than the AR teacher, but
this simply reflects GPT-2-large finding the push-forward sampler's
output more ``typical''---not that K-Forcing actually surpasses the teacher.
The pairwise judge, by directly comparing matched completions, gives
a more reliable quality signal. Overall, K-Forcing offers a favorable,
$k$-controllable quality--speed trade-off, making it an effective
joint multi-token predictor for batch serving.

\subsection{Ablation Analysis}
\label{sec:4:exp:ablation} 

We ablate  the supervision strategy
used to construct noise--token pairs
(Section~\ref{sec:3:push:training}) and the architecture used to
parameterize the push-forward mapping
(Section~\ref{sec:3:push:arch}).

\begin{table}[htbp]
  \centering
\caption{Per-position validation NLL after 200K distillation steps for
K-Forcing($k{=}4$). Lower is better. Variants: (A) noise inversion + standard MTP;
(B) noise inversion + fully causal; and (C) self-forcing + fully causal. The
AR row reports the converged AR teacher validation NLL.}
  \label{tab:ablation_final_nll}
\begin{tabular}{ccccccc}
  \toprule
  Dataset & Variant
  & $\mathcal{L}_{1}$ & $\mathcal{L}_{2}$
  & $\mathcal{L}_{3}$ & $\mathcal{L}_{4}$ & Avg. \\
  \midrule
  \multirow{4}{*}{LM1B}
     & AR & 3.04 & -- & -- & -- & -- \\
  & (A) & 2.84 & 4.43 & 5.38 & 5.90 & 4.64 \\
  & (B) & 2.51 & 3.94 & 4.82 & 5.34 & 4.15 \\
  & (C) & \textbf{0.87} & \textbf{1.90} & \textbf{2.73} & \textbf{2.87} & \textbf{2.09} \\
  \midrule
  \multirow{4}{*}{OWT}
    & AR & 2.86 & -- & -- & -- & -- \\
  & (A) & 2.27 & 4.31 & 5.73 & 6.10 & 4.60 \\
  & (B) & 1.87 & 3.56 & 4.52 & 4.95 & 3.73 \\
  & (C) & \textbf{0.39} & \textbf{2.05} & \textbf{2.82} & \textbf{2.92} & \textbf{2.05} \\
  \bottomrule
\end{tabular}
 
\end{table}

\textbf{Variants.}
We compare three K-Forcing($k{=}4$) variants:
\textbf{(A)} noise inversion + standard MTP,
\textbf{(B)} noise inversion + fully causal, and
\textbf{(C)} self-forcing distillation + fully causal.
All variants are initialized from the same AR checkpoint for a fair
comparison. For noise inversion, supervision is constructed from the AR
teacher. For self-forcing distillation, supervision is generated by a
well-trained K-Forcing($k{=}2$) teacher.

\textbf{Metric.}
We use per-position NLL on the validation sets of LM1B and OWT during the
first 200K distillation steps to compare the variants. Specifically, for each
future position $j\in\{1,\dots,4\}$, we define
$\mathcal{L}_j(\theta)=-\sum_{t\in\mathcal{T}}
\log p_{\theta,j}(\hat{x}_{t+j}\mid x_{\leq t},\mathbf{z}^{(t)})/|\mathcal{T}|$,
which is the $j$-th per-position component of the K-Forcing objective in
\eqref{eq:pflm-loss}. This metric measures how accurately the learned
push-forward mapping predicts the $j$-th future token given the prefix and input
noise; lower values indicate that the student more faithfully recovers the
underlying AR teacher's noise-to-token mapping. We also report the final
converged validation NLL of the AR teacher as a context-only next-token
prediction reference. \emph{Achieving lower NLL than this AR reference suggests
that, conditioned on both the context and input noise, a future token can
be predicted nearly as accurately as the next token predicted by an AR model
from context alone.}

\begin{figure}[htbp]
  \centering
  \includegraphics[width=0.6\textwidth]{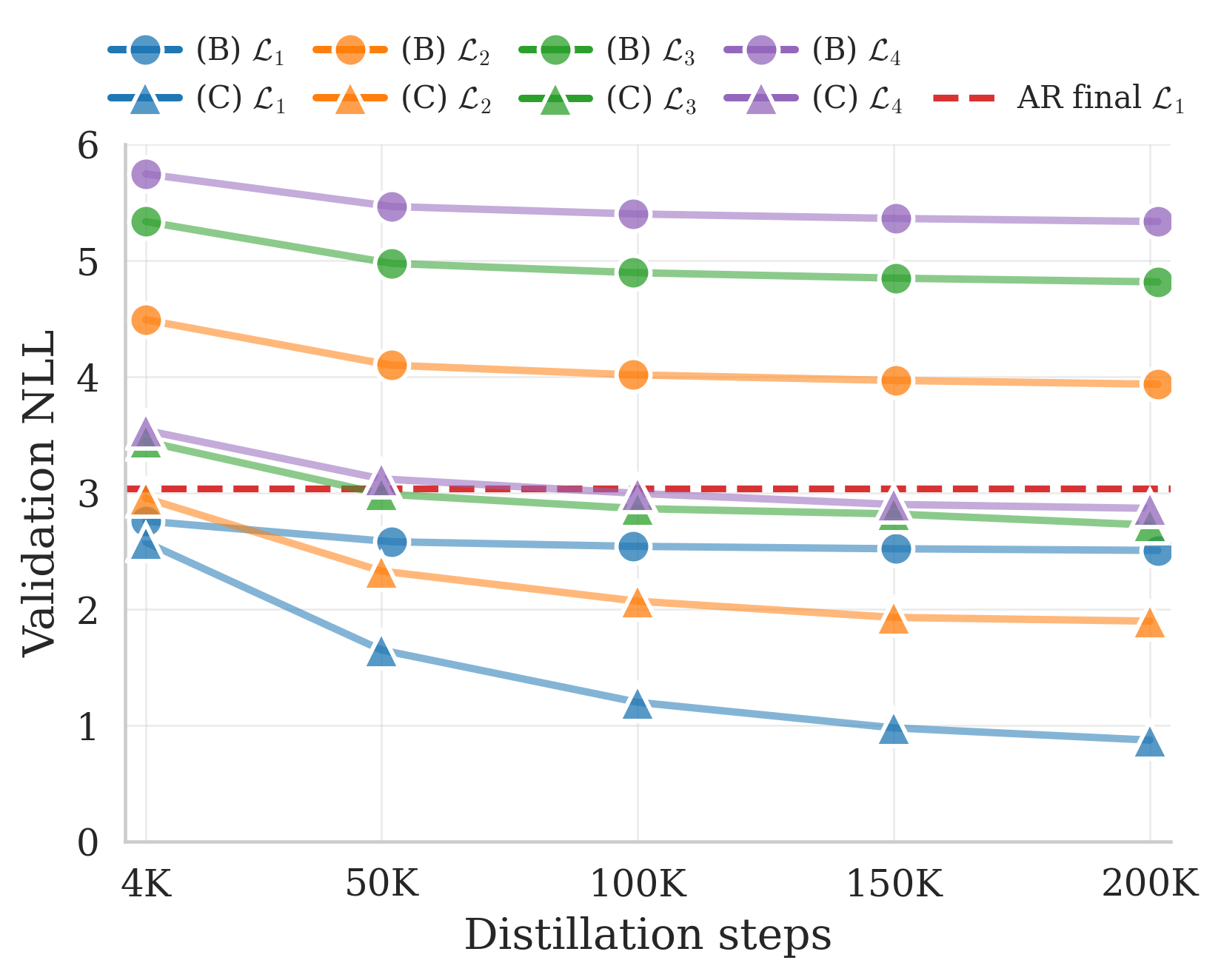}
  \caption{Training dynamics of K-Forcing on LM1B during the first 200K
  distillation steps. We plot per-position validation NLL for variants
  (B) and (C), together with the AR teacher reference line.}
  \label{fig:ablation_lm1b_dynamics}
\end{figure}

\textbf{Results.}
Table~\ref{tab:ablation_final_nll} shows that the default K-Forcing design
\textbf{(C)} achieves
the lowest validation NLL on both datasets and across all prediction
positions, improving over \textbf{(A)} and \textbf{(B)} by a large margin. The comparison between \textbf{(A)} and \textbf{(B)}
indicates that the fully causal architecture improves performance under the
same supervision strategy, suggesting that leveraging this inductive bias is
helpful. Meanwhile, the significant gap between \textbf{(B)} and \textbf{(C)}
highlights the importance of a precise teacher supervision signal. More
importantly, \textbf{(C)} is the only variant that achieves lower validation
NLL than the AR teacher reference at almost all positions after only 200K
distillation steps.

\textbf{Training dynamics of K-Forcing.}
Figure~\ref{fig:ablation_lm1b_dynamics} shows that \textbf{(C)} converges much
faster and to a lower validation NLL than both \textbf{(A)} and \textbf{(B)},
suggesting that stable self-forced teacher supervision also accelerates
convergence. We also observe that $\mathcal{L}_j$ converges more slowly as
$j$ increases, suggesting that farther future positions are harder to learn.
Even for the best variant \textbf{(C)}, however, the per-position NLL is not
driven to zero at convergence, indicating that the learned push-forward mapping
remains imperfect. This may be due in part to limited student capacity, since
the current model scale is relatively small. A more plausible explanation is
that the self-forced supervision signal is still imperfect, since batch-variant
GPU kernel effects can still slightly perturb the K-Forcing teacher outputs.

\subsection{Comparison with Existing Inference Paradigms}
\label{sec:4:exp:bz1} 

We further compare K-Forcing with the two modeling-level acceleration paradigms
discussed in Section~\ref{sec:2:analysis}. We instantiate these paradigms
with MDLM~\citep{sahoo2024simple} and
Medusa~\citep{10.5555/3692070.3692273}. Medusa performs speculative decoding
using lightweight MTP heads attached to the AR model, making it a suitable
baseline in our setting; stronger variants such as
EAGLE~\citep{10.5555/3692070.3693232} and
Hydra~\citep{ankner2024hydra} introduce additional autoregressive modules
whose inference cost is non-negligible at our model scale. We also include PTP~\citep{draxler2025parallel}, which is essentially
equivalent to using variant~\textbf{(B)} from Section~\ref{sec:4:exp:ablation}
as the drafting model in speculative decoding.  As discussed in
Section~\ref{sec:2:analysis}, these paradigms have different serving
structures, so raw batch-serving throughput is not a suitable metric for this
comparison. We therefore compare them using the quality--NFE trade-off.

\textbf{Comparison design.}
To ensure a consistent comparison across paradigms, we keep all methods on
the same Transformer backbone as AR and define NFE in a comparable way.
For AR, MDLM, and K-Forcing, one NFE corresponds to a single model forward pass.
For Medusa and PTP, each speculative decoding iteration requires two forward
passes---one draft pass to propose candidate tokens and one target AR pass to
verify them---so we report their NFE as $2\times$ the number of speculative
iterations (see Table~\ref{tab:nfe_quality}).
Note that this NFE definition is conservative for K-Forcing: an MDLM forward pass
uses bidirectional attention, and both Medusa and PTP require separate draft
and target-model computation, whereas K-Forcing uses a single causal forward pass
to produce a fixed number of tokens.

\textbf{Experiment setup.}
We conduct this study on the OWT dataset using the same inference protocol as
in Section~\ref{sec:4:exp:quality}: we sample 1024 held-out prefixes of
64 tokens, generate completions to length 1024, and evaluate generation
quality with the same sample-based metrics, Gen-PPL and win rate against AR.
For AR and K-Forcing, we use the same checkpoints and setup as in
Section~\ref{sec:4:exp:quality}.
For MDLM, we use the checkpoint released by \citep{sahoo2024simple} and
reduce NFE by unmasking a fixed number of $k$ tokens per forward pass,
selecting the top-$k$ positions by confidence after sampling at temperature
$\tau{=}1.0$, following the practice in LLaDA~\citep{nie2025large}.
For Medusa, we attach $k{=}4$ MTP heads to the AR backbone and train them on
the same OWT training set for 500K steps with batch size 512.
For PTP, we reuse the variant~\textbf{(B)} checkpoint from
Section~\ref{sec:4:exp:ablation} as the draft model.

\begin{table}[htbp]
\centering
\caption{Quality--NFE trade-off on OWT. We report the number of forward evaluations (NFE) required to generate a 960-token completion, Gen-PPL (lower is better), and LLM-as-a-Judge Win rate against AR (higher is better). For speculative methods (Medusa, PTP), NFE counts both draft and verification passes.}
\label{tab:nfe_quality}
\begin{tabular}{l c c c}
\toprule
Method & NFE\,$\downarrow$ & Gen-PPL\,$\downarrow$ & Win vs.\ AR\,$\uparrow$ \\
\midrule
AR      & 960 & 42.64& --     \\
\midrule
MDLM ($k{=}1$)    & 960 & 65.17 & 42.4\% \\
MDLM ($k{=}2$)    & 480  & 224.1 & 26.6\% \\
\midrule
Medusa ($k{=}4$)  &  $2\times$539.3 & 44.21 & 50.3\% \\
PTP ($k{=}4$)  &  $2\times$339.2 & 41.98 & 49.6\% \\
\midrule
K-Forcing ($k{=}2$)    & 480  & 32.82 & 46.9\% \\
K-Forcing ($k{=}3$)    & 320  & 29.67 & 42.8\% \\
K-Forcing ($k{=}4$)    & 240  & 24.97 & 39.4\% \\
\bottomrule
\end{tabular}
\end{table}

\textbf{Results.} Table~\ref{tab:nfe_quality} shows that K-Forcing achieves the most favorable
quality--NFE trade-off among the compared paradigms.
MDLM already underperforms AR at the same 960 NFEs (65.17 vs.\ 42.64
Gen-PPL), and degrades sharply when unmasking two tokens per step (224.1
Gen-PPL at 480 NFEs), consistent with Theorem~\ref{thm:nfe-lower-bound}.
Medusa and PTP preserve AR-level quality through speculative verification,
but each iteration requires two full-size forward passes; Medusa ends up
using $2\times 539.3\!\approx\!1079$ NFEs---\emph{more} than AR---while PTP
reduces this to ${\approx}678$ by leveraging additional noise conditioning.
In contrast, K-Forcing reduces NFE directly via a single causal forward pass:
K-Forcing($k{=}2$) matches the 480-NFE budget of MDLM($k{=}2$) with far better
Gen-PPL and win rate, and K-Forcing($k{=}4$) further cuts NFE to
240 while achieving the lowest Gen-PPL of 24.97 and a 39.4\% win rate. These results highlight the advantage of modeling joint multi-token
samples with a push-forward mapping and progressive self-forcing distillation. Qualitative samples for each method are provided in
Appendix~\ref{app:4:experiment:qualitative}.


\section{Conclusion and Future Work}
\label{sec:5:con} 

We presented K-Forcing, which learns a push-forward mapping to jointly decode multiple tokens per forward pass. Our experiments confirm that K-Forcing achieves $2.4$--$3.5\times$ batch-serving speedup while maintaining generation quality close to the AR teacher. As inference cost increasingly dominates modern language model deployments, we believe K-Forcing offers a promising direction for practical serving acceleration.

At the same time, our current work should be viewed as an initial step toward
a broader research direction for push-forward language models. While K-Forcing already attains favorable
throughput--quality trade-offs, a non-trivial generation-quality gap to the AR
teacher remains, especially at larger prediction windows $k$. \emph{Closing
this gap is the central challenge for scaling PFLM to larger models and more
aggressive prediction windows.} We highlight several concrete directions:

\begin{itemize}[leftmargin=1.5em,itemsep=4pt]

\item \textbf{Custom kernels for self-forcing training.}
As discussed in Section~\ref{sec:3:push:impl}, our current implementation incurs $O(k^2)$ training cost due to dense attention masks, although the theoretical cost is only $O(k)$. Developing custom block-sparse attention kernels (see Appendix~\ref{app:3:implementation:training}) would close this gap and make self-forcing distillation practical at larger prediction windows and model scales.

\item \textbf{Reproducible AR sampling for stable teacher supervision.}
The quality of the learned push-forward mapping depends on the consistency of the teacher supervision signal: identical context and noise vectors must produce identical target tokens across training iterations and batching configurations, or the student receives contradictory gradients. In practice, modern GPU kernels do not guarantee bitwise reproducibility---cuBLAS may select different internal algorithms depending on batch size~\citep{deepseekv4}, and attention kernels may use non-deterministic accumulation orders~\citep{he2025nondeterminism,deepseekv4}. Reducing such noise via \emph{batch-invariant GPU kernels} and deterministic generation primitives is essential for scaling PFLM training. Recent batch-invariant kernel libraries from DeepSeek-V4~\citep{deepseekv4} offer a promising starting point.

\item \textbf{Alternative training paradigms beyond K-Forcing.}
Whether progressive self-forcing is the most effective training strategy for
PFLM remains an open question. We highlight two promising directions.
First, the current progressive recipe requires multiple sequential stages, each involving expensive
teacher forward passes to generate supervision; developing a single-stage
distillation algorithm could substantially reduce the total training cost.
More ambitiously, the current recipe relies entirely on a pretrained AR
teacher. Whether it is possible to train a PFLM \emph{without} a pretrained AR
teacher---learning the push-forward mapping from scratch---remains an open
research problem.
\end{itemize}

We hope K-Forcing provides an initial demonstration that push-forward language modeling offers
a viable path toward more efficient language-model inference.

\nocite{*}


\bibliographystyle{plainnat}
\bibliography{example_paper}

\appendix

\section{Proof of Theorem~\ref{thm:nfe-lower-bound}}
\label{app:nfe-lower-bound}

We prove Theorem~\ref{thm:nfe-lower-bound} from the main text.

\begin{proof}
The proof proceeds as follows. We first derive the closed-form optimal solution of the MDLM objective and show that it recovers the Bayes-optimal per-position marginal. We then show that independent multi-token sampling from this optimal predictor necessarily distorts the joint distribution, and finally derive the NFE lower bound.

\textbf{Step 1: Closed-form for optimal solution of the MDLMs.}
Consider a partially masked sequence $x_t$ in which the set of unmasked positions is $\mathcal{U} \subseteq \{1, \dots, L\}$, revealing tokens $(x^i)_{i \in \mathcal{U}}$, and the complementary set $\mathcal{M} = \{1, \dots, L\} \setminus \mathcal{U}$ of $M = |\mathcal{M}|$ positions are masked.

We show that the MDLM training objective \eqref{eq:mdlm-objective} admits a closed-form minimizer that equals the Bayes-optimal marginal at each masked position. Since the $1/t$ factor and the indicator $\mathbf{1}[x_t^i = \textrm{M}]$ are non-negative and do not depend on $\theta$, the objective decomposes into independent per-position problems. For each masked position $j \in \mathcal{M}$, the relevant term is:
\begin{align}
    \mathcal{L}_j(\theta) = -\mathbb{E}_{x_0, x_t}\!\left[\mathbf{1}[x_t^j = \textrm{M}]\,\log p_\theta(x_0^j \mid x_t)\right].
\end{align}
Conditioning on $x_t$ (with position $j$ masked), this reduces to minimizing the cross-entropy between the true conditional distribution $p(x_0^j \mid x_t)$ and the model's prediction $p_\theta(x_0^j \mid x_t)$:
\begin{align}
    \mathcal{L}_j(\theta) = -\mathbb{E}_{x_t}\!\left[\mathbf{1}[x_t^j = \textrm{M}]\,\sum_{v \in \mathcal{V}} p(x_0^j = v \mid x_t)\,\log p_\theta(x_0^j = v \mid x_t)\right].
\end{align}
By Gibbs' inequality, for each $x_t$ this cross-entropy is minimized if and only if $p_\theta(x_0^j \mid x_t) = p(x_0^j \mid x_t)$. Since position $j$ is masked, the clean token $x_0^j$ is conditionally independent of the other masked tokens' identities given $x_t$; the true conditional is obtained by marginalizing over all other masked positions:
\begin{align}
\label{eq:app:bayes-marginal}
    p^*(x^j \mid x_t) = p(x^j \mid x_{\mathcal{U}}) = \sum_{x_{\mathcal{M} \setminus \{j\}}} p(x_{\mathcal{M}} \mid x_{\mathcal{U}}).
\end{align}
Crucially, because the objective decomposes across positions, the optimal predictor at each position $j$ recovers only the \emph{marginal} distribution $p(x^j \mid x_{\mathcal{U}})$, not the joint distribution $p(x_{\mathcal{M}} \mid x_{\mathcal{U}})$ over all masked positions.

\textbf{Step 2: Independent sampling distorts the joint distribution.}
Suppose that at some denoising steps, $K > 1$ masked positions $\{j_1, \dots, j_K\} \subseteq \mathcal{M}$ are selected for simultaneous unmasking. The MDLM inference procedure samples each of these positions independently from its Bayes-optimal marginal, producing a joint sampling distribution:
\begin{align}
\label{eq:app:indep-joint}
    q(x^{j_1}, \dots, x^{j_K} \mid x_t) = \prod_{\ell=1}^{K} p^*(x^{j_\ell} \mid x_t) = \prod_{\ell=1}^{K} p(x^{j_\ell} \mid x_{\mathcal{U}}).
\end{align}
The true conditional joint distribution over these $K$ positions is:
\begin{align}
\label{eq:app:true-joint}
    p(x^{j_1}, \dots, x^{j_K} \mid x_{\mathcal{U}}) = \sum_{x_{\mathcal{M} \setminus \{j_1, \dots, j_K\}}} p(x_{\mathcal{M}} \mid x_{\mathcal{U}}).
\end{align}

By the conditional irreducibility assumption, for any subset $S$ with $|S| \geq 2$ and any nontrivial partition $S = S_1 \sqcup S_2$, there exists an assignment $x_{\bar{S}}$ such that $p(x_S \mid x_{\bar{S}}) \neq p(x_{S_1} \mid x_{\bar{S}}) \, p(x_{S_2} \mid x_{\bar{S}})$. Setting $S = \{j_1, \dots, j_K\}$ and choosing the partition and the conditioning assignment guaranteed by the assumption, we obtain:
\begin{align}
    p(x^{j_1}, \dots, x^{j_K} \mid x_{\mathcal{U}}) \neq \prod_{\ell=1}^{K} p(x^{j_\ell} \mid x_{\mathcal{U}})
\end{align}
for at least one realization of $x_{\mathcal{U}}$ in the support of $p$. Therefore:
\begin{align}
    q(x^{j_1}, \dots, x^{j_K} \mid x_t) \neq p(x^{j_1}, \dots, x^{j_K} \mid x_{\mathcal{U}}),
\end{align}
which means the distribution induced by independent sampling does not equal the true joint. In particular, the independent sampling distribution $q$ may assign positive probability to token combinations $(x^{j_1}, \dots, x^{j_K})$ that have zero probability under $p(\cdot \mid x_{\mathcal{U}})$, producing sequences outside the support of $p$.

\textbf{Step 3: NFE lower bound.}
Since simultaneously unmasking $K > 1$ tokens at any step introduces distributional error, the only strategy that guarantees the generated distribution equals $p$ exactly is to unmask exactly one token per step. Generating a sequence of length $L$ therefore requires at least $L$ NFEs, matching the autoregressive baseline.
\end{proof}

\textbf{Illustrative example.}
We provide a concrete instance of Theorem~\ref{thm:nfe-lower-bound} with the smallest nontrivial case, using the uniform distribution over permutations.

\begin{example}
\label{ex:permutation-app}
Let $N = 2$ and $\mathcal{V} = \{0, 1\}$. The data distribution $p$ is uniform over $S_2 = \{[0,1],\, [1,0]\}$. This distribution is conditionally irreducible: the two positions are dependent (knowing one determines the other).

Starting from the fully masked sequence $[\textbf{m}, \textbf{m}]$, the Bayes-optimal predictor assigns:
\begin{align}
    p^*(x^1 \mid [\textbf{m}, \textbf{m}]) &= \mathrm{Uniform}(\{0, 1\}), \\
    p^*(x^2 \mid [\textbf{m}, \textbf{m}]) &= \mathrm{Uniform}(\{0, 1\}).
\end{align}
Sampling both positions independently yields:
\begin{center}
\begin{tabular}{ccc}
\toprule
\textbf{Output} & \textbf{Probability} & \textbf{Valid?} \\
\midrule
$[0, 0]$ & $25\%$ & $\times$ \\
$[0, 1]$ & $25\%$ & \checkmark \\
$[1, 0]$ & $25\%$ & \checkmark \\
$[1, 1]$ & $25\%$ & $\times$ \\
\bottomrule
\end{tabular}
\end{center}
The invalid-sequence probability is $50\%$, confirming that independent sampling from per-position marginals fails to capture the joint distribution. The true distribution assigns $50\%$ to each of $[0,1]$ and $[1,0]$, and $0\%$ to $[0,0]$ and $[1,1]$.

In contrast, a two-step procedure---unmask one token, then unmask the other conditioned on the first---produces only valid permutations. After observing $x^1 = 0$, the predictor correctly assigns $p^*(x^2 \mid [0, \textbf{m}]) = \delta_1$, and vice versa. This sequential procedure requires $N = 2$ NFEs, matching the AR baseline.
\end{example}

\section{Existence of the Push-Forward Mapping}
\label{app:existence}

We show that for any data distribution $p$ over $\mathcal{V}^*$ and any
prediction window $k \geq 1$, a closed-form push-forward mapping
$G^\star : \mathcal{V}^* \times [0,1]^k \to \mathcal{V}^k$ can be
constructed from an AR oracle that computes the conditional distributions
$q_{\mathrm{AR}}(x_{t+j} \mid x_{\leq t+j-1})$ for $j = 1,\dots,k$.

\textbf{Construction.}
Fix a context $x_{\leq t}$ and impose an arbitrary fixed ordering on the
vocabulary $\mathcal{V} = \{w_1,\dots,w_{|\mathcal{V}|}\}$.
For each step $j = 1,\dots,k$, define the CDF
\begin{equation}
  F_{\mathrm{AR}}(w_\ell \mid x_{\leq t+j-1})
  \;=\;
  \sum_{m=1}^{\ell}
    q_{\mathrm{AR}}(x_{t+j} = w_m \mid x_{\leq t+j-1}),
  \qquad \ell = 1,\dots,|\mathcal{V}|,
  \label{eq:cdf}
\end{equation}
with the convention $F_{\mathrm{AR}}(w_0 \mid \cdot) = 0$, and the
corresponding inverse-CDF map
\begin{equation}
  F_{\mathrm{AR}}^{-1}(z \mid x_{\leq t+j-1})
  \;=\;
  w_\ell,
  \label{eq:icdf}
\end{equation} where $  \ell = \min\bigl\{
    m \in \{1,\dots,|\mathcal{V}|\} :
    F_{\mathrm{AR}}(w_m \mid x_{\leq t+j-1}) > z
  \bigr\}.$
  
It is a standard result that if $z \sim \mathrm{Uniform}(0,1)$, then
$F_{\mathrm{AR}}^{-1}(z \mid x_{\leq t+j-1})
  \sim q_{\mathrm{AR}}(\cdot \mid x_{\leq t+j-1})$.

Given $\mathbf{z} = (z_1,\dots,z_k) \sim \mathrm{Uniform}([0,1]^k)$, we
define $G^\star$ recursively:
\begin{align}
  \hat{x}_{t+1}
    &= F_{\mathrm{AR}}^{-1}(z_1 \mid x_{\leq t}),
    \notag \\
  \hat{x}_{t+2}
    &= F_{\mathrm{AR}}^{-1}(z_2 \mid x_{\leq t},\,
       \hat{x}_{t+1}),
    \notag \\
    &\;\;\vdots \notag \\
  \hat{x}_{t+k}
    &= F_{\mathrm{AR}}^{-1}(z_k \mid x_{\leq t},\,
       \hat{x}_{t+1},\dots,\hat{x}_{t+k-1}).
  \label{eq:sequential-icdf}
\end{align}
Note that this is precisely the unrolled form of
\eqref{eq:main-sequential-icdf} in the main text.

\textbf{Correctness.}
By the chain rule and the mutual independence of $z_1,\dots,z_k$:
\begin{align}
  &\Pr\!\bigl[
    G^\star(x_{\leq t},\,\mathbf{z}) = (v_1,\dots,v_k)
  \bigr]
  \notag \\
  &\quad=\;
  \prod_{j=1}^{k}
  \Pr\!\bigl[
    \hat{x}_{t+j} = v_j \mid
    \hat{x}_{t+1}=v_1,\dots,\hat{x}_{t+j-1}=v_{j-1}
  \bigr]
  \notag \\
  &\quad=\;
  \prod_{j=1}^{k}
  q_{\mathrm{AR}}(
    x_{t+j}=v_j \mid x_{\leq t},\, v_1,\dots,v_{j-1}
  )
  \notag \\
  &\quad=\;
  p(x_{t+1}=v_1,\dots,x_{t+k}=v_k \mid x_{\leq t}),
  \label{eq:correctness}
\end{align}
where the second equality uses the fact that each
$F_{\mathrm{AR}}^{-1}(z_j \mid \cdot)$ with
$z_j \sim \mathrm{Uniform}(0,1)$
produces a sample from the corresponding AR conditional.
Since the context $x_{\leq t}$ was arbitrary,
\emph{$G^\star$ exactly reproduces the data distribution.}

\textbf{Illustrative example.}
Consider $k{=}2$, $\mathcal{V}=\{A,B,C\}$, and an AR model whose support
is $\{[A,B],\,[B,A],\,[B,C]\}$ with
$q_{\mathrm{AR}}(A \mid \textbf{sos}) = \tfrac{1}{3}$,
$q_{\mathrm{AR}}(B \mid \textbf{sos}) = \tfrac{2}{3}$,
$q_{\mathrm{AR}}(B \mid A) = 1$,
$q_{\mathrm{AR}}(A \mid B) = \tfrac{1}{2}$,
$q_{\mathrm{AR}}(C \mid B) = \tfrac{1}{2}$.
Applying the construction above, we draw
$z_1 \sim \mathrm{Uniform}(0,1)$ and select the first token via the
inverse CDF:
\begin{equation*}
  \hat{x}_{t+1} =
  \begin{cases}
    A & \text{if } z_1 \in [0,\,\tfrac{1}{3}), \\
    B & \text{if } z_1 \in [\tfrac{1}{3},\,1].
  \end{cases}
\end{equation*}
Then, conditioned on $\hat{x}_{t+1}$, we draw
$z_2 \sim \mathrm{Uniform}(0,1)$ and select $\hat{x}_{t+2}$ similarly.
Composing these two steps yields a deterministic map
$(z_1,z_2) \mapsto (\hat{x}_{t+1},\hat{x}_{t+2})$ that partitions the
unit square $[0,1]^2$ into three regions:
\begin{equation}
  G^\star(x_{\leq t},\,z_1,z_2) =
  \begin{cases}
    [A,B] & \text{if } z_1 \in [0,\,\tfrac{1}{3}),\;
            z_2 \in [0,1], \\[4pt]
    [B,A] & \text{if } z_1 \in [\tfrac{1}{3},\,1],\;
            z_2 \in [0,\,\tfrac{1}{2}), \\[4pt]
    [B,C] & \text{if } z_1 \in [\tfrac{1}{3},\,1],\;
            z_2 \in [\tfrac{1}{2},\,1].
  \end{cases}
  \label{eq:ar-pushforward-example}
\end{equation}
One can verify that the induced probabilities match the joint:
\begin{equation*}
  \Pr[(A,B)] = \tfrac{1}{3}, \qquad
  \Pr[(B,A)] = \tfrac{1}{3}, \qquad
  \Pr[(B,C)] = \tfrac{1}{3}.
\end{equation*}
This confirms that AR decoding is itself a push-forward mapping---one that
requires $k$ sequential forward passes because $z_j$'s mapping depends on
the tokens produced by $z_1,\dots,z_{j-1}$.
PFLM aims to learn a neural network $G_\theta$ that collapses these $k$
sequential passes into a single forward pass.

\section{Implementation Details}
\label{app:3:implementation}

\subsection{KV-Cached Inference for K-Forcing}
\label{app:3:implementation:kvcache}

Algorithm~\ref{alg:pflm_inference} presents K-Forcing inference with KV-cache.
The procedure mirrors standard autoregressive KV-cache decoding, with one
key modification: at each step, $k$ noise tokens are appended to the
input so that the model predicts $k$ output tokens at once instead of one.

The cache management follows a simple rule: after each step, only the KV
entries of the generated \emph{context} tokens (i.e., the real predictions
that future steps will attend to) are appended to the cache, while the KV
entries of the noise tokens are discarded.
This is sound because noise tokens are resampled independently at every
step; their KV states are needed only for the current step's attention
and are meaningless thereafter.

Concretely, the first step processes the full prompt together with the
initial noise tokens (prefill).
Every subsequent step feeds only $2k$ new tokens into the Transformer---$k$
previously generated tokens plus $k$ fresh noise tokens---while the cached
KV states provide attention over the full history (decode).
The per-step cost therefore scales with $2k$ rather than the cumulative
sequence length, matching the asymptotic complexity of standard
autoregressive decoding while producing $k$ tokens per step.

\begin{algorithm}[t]
\caption{K-Forcing Inference with KV Cache}
\label{alg:pflm_inference}
\begin{algorithmic}[1]
\REQUIRE Prompt tokens $x_{1:N}$, prediction window $k$, generation length $T$, noise encoder \texttt{NoiseEnc}, Transformer \texttt{TF}, output head \texttt{Head}
\ENSURE Generated token sequence $\hat{x}_{1:T}$
\STATE Initialize KV cache $\mathcal{C} \leftarrow \emptyset$;\quad $\hat{x} \leftarrow [\,]$
\STATE $\mathbf{h}_x \leftarrow \texttt{Embed}(x_{1:N})$
\FOR{$s = 0, k, 2k, \dots$ \textbf{while} $s < T$}
    \STATE Sample noise $\mathbf{z} \sim \mathrm{Uniform}(0,1)^{k}$;\quad $\mathbf{h}_z \leftarrow \texttt{NoiseEnc}(\mathbf{z})$
    \STATE $\mathbf{h} \leftarrow [\mathbf{h}_x;\, \mathbf{h}_z]$
    \STATE $\mathbf{o},\, [\mathcal{C}_x,\, \mathcal{C}_z] \leftarrow \texttt{TF}(\mathbf{h},\, \mathcal{C})$ \hfill $\triangleright$ $\mathcal{C}_x$, $\mathcal{C}_z$: KV entries for context and noise tokens
    \STATE $\mathcal{C} \leftarrow [\mathcal{C};\, \mathcal{C}_x]$ \hfill $\triangleright$ Retain only context KV; discard $\mathcal{C}_z$
    \STATE $(\hat{x}_{s+1}, \dots, \hat{x}_{s+k}) \leftarrow \arg\max\, \texttt{Head}(\mathbf{o}_{-k:})$
    \STATE Append $(\hat{x}_{s+1}, \dots, \hat{x}_{s+k})$ to $\hat{x}$
    \STATE $\mathbf{h}_x \leftarrow \texttt{Embed}(\hat{x}_{s+1}, \dots, \hat{x}_{s+k})$
\ENDFOR
\end{algorithmic}
\end{algorithm}

\textbf{Continuous batching compatibility.}
Because every request produces exactly $k$ tokens per decoding step, all
requests in a batch remain synchronized in position indices and
KV-cache lengths.
This fixed-stride output structure is naturally compatible with continuous
batching schedulers: new requests can be inserted at any step boundary
without cross-request realignment or padding, unlike speculative decoding
where variable acceptance lengths desynchronize the
batch~(Section~\ref{sec:2:relate:spec}).

\subsection{Training: Pseudocode, Attention Masks, and Cost Analysis}
\label{app:3:implementation:training}

We present the complete training procedure for both distillation stages,
incorporating temperature control.
Each algorithm references two forward functions---\textbf{SingleForward} and
\textbf{DoubleForward}---whose attention masks we define afterwards.
We then analyze training cost and discuss directions for efficient kernel
implementations.


\textbf{Stage 1: forward distillation (AR $\to$ PFLM($k{=}1$)).}
Algorithm~\ref{alg:pflm_stage1} bootstraps a PFLM with $k{=}1$ from an
AR teacher.
For each context position, a uniform noise $z$ is mapped through the
teacher's temperature-scaled inverse-CDF sampler to produce a target token.
A per-sample temperature $\tau$ is drawn uniformly and used to scale the
teacher's logits by $1/\tau$ before softmax, controlling the sharpness of
the target distribution.
The student's noise encoder receives $\tau$ as an additional input alongside
$\mathbf{z}$, learning to associate different $\tau$ values with different
output diversity levels.

The \emph{AR teacher} performs a standard causal forward pass to produce
per-position logit distributions.
The \emph{student} uses \textbf{SingleForward} (defined below) with $k{=}1$.

\begin{algorithm}[t]
\caption{K-Forcing Stage 1: Forward Distillation (AR $\to$ PFLM($k{=}1$))}
\label{alg:pflm_stage1}
\begin{algorithmic}[1]
\REQUIRE AR teacher (frozen), PFLM student with $k{=}1$ (trainable)
\REQUIRE Training corpus $\mathcal{D}$, temperature range $[\tau_{\min}, \tau_{\max}]$
\FOR{each batch $\mathbf{x} \in \mathcal{D}$}
    \STATE $B, N \leftarrow \text{shape}(\mathbf{x})$;\quad $N_p \leftarrow N - 1$;\quad $\mathbf{c} \leftarrow \mathbf{x}_{:,\,1:N_p}$
    \STATE Sample $\tau \sim \mathrm{Uniform}(\tau_{\min}, \tau_{\max})$ per sample;\quad $\mathbf{z} \sim \mathrm{Uniform}(0,1)^{B \times N_p \times 1}$
    \STATE \textcolor{gray}{\textit{// Teacher: AR causal forward pass with temperature scaling}}
    \STATE $p \leftarrow \mathrm{softmax}\bigl(\text{AR}(\mathbf{c})\, /\, \tau\bigr)$ \hfill $\triangleright$ $\tau$ sharpens/flattens teacher distribution
    \STATE $\text{targets} \leftarrow \mathrm{Inverse\text{-}CDF}(p,\, \mathbf{z})$ \hfill $\triangleright$ $(B, N_p, 1)$
    \STATE \textcolor{gray}{\textit{// Student: \textbf{SingleForward}, $\tau$-conditioned}}
    \STATE $\text{logits} \leftarrow \textbf{SingleForward}_{\text{Student}}(\mathbf{c},\, \mathbf{z},\, \tau)$ \hfill $\triangleright$ Noise encoder receives both $\mathbf{z}$ and $\tau$
    \STATE $\mathcal{L} \leftarrow \text{CrossEntropy}(\text{logits},\, \text{targets})$
    \STATE Update Student via $\nabla \mathcal{L}$
\ENDFOR
\end{algorithmic}
\end{algorithm}


\textbf{Stage 2: self-forcing distillation (PFLM($k$) $\to$ PFLM($2k$)).}
Algorithm~\ref{alg:pflm_stage2} doubles the prediction window.
The teacher performs two rounds of prediction---the first generates $k$
tokens via \textbf{SingleForward}, the second conditions on those predictions
to generate the next $k$ via \textbf{DoubleForward}---while the student
learns to produce all $2k$ tokens in one \textbf{SingleForward} pass with
window size $2k$.
Since the teacher is itself $\tau$-conditioned from Stage~1, the same
sampled $\tau$ is passed identically to both teacher and student without
any logit rescaling.

\begin{algorithm}[t]
\caption{K-Forcing Stage 2: Self-Forcing Distillation (PFLM($k$) $\to$ PFLM($2k$))}
\label{alg:pflm_stage2}
\begin{algorithmic}[1]
\REQUIRE Teacher PFLM with window $k$ (frozen), Student PFLM with window $2k$ (trainable)
\REQUIRE Training corpus $\mathcal{D}$, temperature range $[\tau_{\min}, \tau_{\max}]$
\FOR{each batch $\mathbf{x} \in \mathcal{D}$}
    \STATE $B, N \leftarrow \text{shape}(\mathbf{x})$;\quad $N_p \leftarrow N - 2k$;\quad $\mathbf{c} \leftarrow \mathbf{x}_{:,\,1:N_p}$
    \STATE Sample $\tau \sim \mathrm{Uniform}(\tau_{\min}, \tau_{\max})$ per sample;\quad $\mathbf{z} \sim \mathrm{Uniform}(0,1)^{B \times N_p \times 2k}$
    \STATE $\mathbf{z}_1, \mathbf{z}_2 \leftarrow \text{split}(\mathbf{z}, k, \text{dim}=2)$
    \STATE \textcolor{gray}{\textit{// Teacher round 1: \textbf{SingleForward}}}
    \STATE $\hat{\mathbf{x}}_1 \leftarrow \arg\max\;\textbf{SingleForward}_{\text{Teacher}}(\mathbf{c},\, \mathbf{z}_1,\, \tau)$;\quad save KV cache $\mathcal{C}$
    \STATE \textcolor{gray}{\textit{// Teacher round 2: \textbf{DoubleForward}, conditioned on $\hat{\mathbf{x}}_1$}}
    \STATE $\hat{\mathbf{x}}_2 \leftarrow \arg\max\;\textbf{DoubleForward}_{\text{Teacher}}(\mathcal{C},\, \hat{\mathbf{x}}_1,\, \mathbf{z}_2,\, \tau)$
    \STATE $\text{targets} \leftarrow [\hat{\mathbf{x}}_1;\, \hat{\mathbf{x}}_2]$ \hfill $\triangleright$ $(B,\, N_p,\, 2k)$
    \STATE \textcolor{gray}{\textit{// Student: \textbf{SingleForward} with window $2k$}}
    \STATE $\text{logits} \leftarrow \textbf{SingleForward}_{\text{Student}}(\mathbf{c},\, \mathbf{z},\, \tau)$ \hfill $\triangleright$ Same $\tau$ as teacher, no rescaling
    \STATE $\mathcal{L} \leftarrow \frac{1}{2k} \sum_{j=1}^{2k} \text{CrossEntropy}\bigl(\text{logits}_{:,:,j,:},\; \text{targets}_{:,:,j}\bigr)$
    \STATE Update Student via $\nabla \mathcal{L}$
\ENDFOR
\end{algorithmic}
\end{algorithm}


\textbf{Temperature control at inference.}
At inference time, the user specifies a desired $\tau$: the noise encoder
receives $\tau$ alongside the sampled $\mathbf{z}$, and low $\tau$ yields
near-greedy outputs while high $\tau$ produces diverse samples.
No retraining or model modification is required.


\textbf{\textbf{SingleForward}: attention mask.}
Both the student's forward pass and the teacher's first-round forward pass
invoke \textbf{SingleForward}.
It takes $N$ context tokens and $Nk$ noise tokens ($k$ per context
position), totalling $N + Nk$ tokens, and produces $k$ output tokens per
position in a single attention call.
Rather than running $N$ independent forward passes, all positions are batched
into one call governed by the attention mask
$\mathbf{M}_{\mathrm{S}} \in \{0,1\}^{(N+Nk) \times (N+Nk)}$,
which consists of three blocks:

\begin{enumerate}[leftmargin=1.5em, itemsep=2pt]
  \item \textbf{Context $\to$ Context} (upper-left $N \times N$):
        standard causal (lower-triangular).
  \item \textbf{Noise $\to$ Context} (lower-left $Nk \times N$):
        the noise group at context position~$t$ attends to context tokens
        $x_1, \dots, x_t$, producing a characteristic staircase pattern.
  \item \textbf{Noise $\to$ Noise} (lower-right $Nk \times Nk$):
        block-diagonal causal---tokens within each group of $k$ attend
        causally to one another but cannot attend to tokens from other groups.
\end{enumerate}

\textbf{\textbf{DoubleForward}: attention mask.}
The teacher's second-round forward pass in Algorithm~\ref{alg:pflm_stage2}
invokes \textbf{DoubleForward}, which extends \textbf{SingleForward} to
condition on a first round of predictions when generating a second round.
We fuse both rounds into a single attention call using the attention mask
$\mathbf{M}_{\mathrm{D}} \in \{0,1\}^{2Nk \times (N + 2Nk)}$.

The input sequence is organized into three blocks:
\emph{Context} ($N$ real prefix tokens),
\emph{Future-1} ($Nk$ token embeddings of the first-round predictions
$\hat{\mathbf{x}}_1$, which serve as extended context for the second round),
and \emph{Future-2} ($Nk$ fresh noise tokens for $\mathbf{z}_2$, from which
the second round of $k$ tokens per position is decoded).
The queries consist of Future-1 and Future-2; they attend to keys from all
three blocks. The mask extends $\mathbf{M}_{\mathrm{S}}$ as follows:

\begin{enumerate}[leftmargin=1.5em, itemsep=2pt]
  \item \textbf{Future-1 $\to$ Context / Future-1}:
        identical to the \textbf{SingleForward} mask $\mathbf{M}_{\mathrm{S}}$.
  \item \textbf{Future-2 $\to$ Context}:
        same staircase pattern as Future-1 $\to$ Context.
  \item \textbf{Future-2 $\to$ Future-1}:
        full visibility within the same window (each Future-2 token sees all
        $k$ Future-1 tokens from its window, since these form the extended
        context).
  \item \textbf{Future-2 $\to$ Future-2}:
        block-diagonal causal within each window.
\end{enumerate}

This reproduces two sequential teacher passes in a single fused attention
call, enabling KV reuse for the shared context block.


\textbf{Theoretical training cost.}
We analyze the attention cost of \textbf{SingleForward} and
\textbf{DoubleForward} by counting the total number of attended (query, key)
pairs, i.e., the number of non-zero entries in each attention mask.

For \textbf{SingleForward} with window $k$:
\begin{equation}
  |\mathbf{M}_{\mathrm{S}}|
  = \underbrace{\tfrac{N(N{+}1)}{2}}_{\text{ctx} \to \text{ctx}}
  + \underbrace{\tfrac{kN(N{+}1)}{2}}_{\text{noise} \to \text{ctx}}
  + \underbrace{N \cdot \tfrac{k(k{+}1)}{2}}_{\text{noise} \to \text{noise}}\,.
\end{equation}
The first term is the standard causal AR attention cost. The second term
accounts for each of the $k$ noise tokens per position attending to the same
causal context prefix. The third term covers the intra-window causal attention
among the $k$ noise tokens at each of the $N$ positions.
Relative to the AR cost of $\frac{N(N+1)}{2}$, the ratio is
$k$ for $N \gg k$.

For \textbf{DoubleForward}, the KV cache from the preceding
\textbf{SingleForward} (covering the context and Future-1 tokens) is reused,
so only the $Nk$ Future-2 query tokens require new attention computation:
\begin{equation}
  |\mathbf{M}_{\mathrm{D}}|
  = \underbrace{\tfrac{kN(N{+}1)}{2}}_{\text{F2} \to \text{ctx}}
  + \underbrace{Nk^2}_{\text{F2} \to \text{F1}}
  + \underbrace{N \cdot \tfrac{k(k{+}1)}{2}}_{\text{F2} \to \text{F2}}\,.
\end{equation}
The first term is the Future-2 $\to$ Context staircase attention. The second
term accounts for each of the $k$ Future-2 tokens attending to all $k$
Future-1 tokens within its window (full visibility, since Future-1 tokens form
the extended context). The third term covers the intra-window causal attention
among Future-2 tokens. The ratio to AR cost is
 also $k$ for $N \gg k$.

In self-forcing distillation (Stage~2), each training step requires one
teacher \textbf{SingleForward}, one teacher \textbf{DoubleForward}, and one
student \textbf{SingleForward}. Both masks have $O(k)$ overhead relative to
standard AR attention for $N \gg k$, so the total training cost scales as
$O(k)$ relative to AR.

\emph{However, our current implementation does not yet exploit this sparsity}: we
pass $\mathbf{M}_{\mathrm{S}}$ and $\mathbf{M}_{\mathrm{D}}$ directly as a
dense mask to the FlashAttention kernel, which treats the full
$(N{+}Nk) \times (N{+}Nk)$ matrix as unstructured and therefore incurs
$O(k^2)$ overhead in practice.


\textbf{Towards efficient kernels.}
Closing the gap between our current $O(k^2)$ implementation cost and the
theoretical $O(k)$ bound requires custom kernels that decompose the
attention into block-sparse tiles matching the mask structure.
In principle, each noise group's attention factorizes into two standard
calls---one variable-length prefix attention over the context and one
fixed-size causal attention within the window---each of which is natively
supported by FlashAttention~\citep{dao2022flashattention}.
However, orchestrating this decomposition efficiently at scale involves
non-trivial engineering challenges:
\emph{(i)}~dispatching variable-length prefix attention across $N$ groups
whose prefix lengths range from $1$ to $N$, without excessive padding or
GPU load imbalance;
\emph{(ii)}~fusing the prefix and intra-window kernels to amortize
kernel-launch overhead;
\emph{(iii)}~handling the \textbf{DoubleForward} mask, whose cross-stream
(Future-2 $\to$ Future-1) full-visibility blocks do not fit standard causal
or sliding-window templates;
and \emph{(iv)}~integrating with paged KV-cache managers for inference.
We view this as an important direction for future work: the mask structures
defined here are fully specified and highly regular, and we hope they provide
a concrete target for community exploration of efficient parallel decoding
kernels.

\section{Experiment Details}
\label{app:4:experiment}

\subsection{Training Configuration}
\label{app:4:experiment:config}

Our codebase is built upon the MDLM open-source repository~\citep{sahoo2024simple}, and we adopt the same tokenizer, sequence lengths, model backbone, and data preprocessing.
Table~\ref{tab:model_config} summarizes the model architecture and training hyperparameters.

\begin{table}[t]
  \centering
  \caption{Model architecture and training hyperparameters.}
  \label{tab:model_config}
  \begin{tabular}{ll}
    \toprule
    \textbf{Hyperparameter} & \textbf{Value} \\
    \midrule
    \textit{Transformer backbone} & $\sim$124M (OWT) / $\sim$108M (LM1B) \\
    Architecture & Decoder-only Transformer \\
    Layers & 12 \\
    Hidden size & 768 \\
    Attention heads & 12 \\
    MLP hidden size & 3072 \\
    Normalization & RMSNorm (pre-norm) \\
    Positional encoding & RoPE with $d_{\mathrm{rot}} = 64$ \\
    Dropout & 0.0 \\
    Vocabulary size (OWT / LM1B) & 50{,}258 / 30{,}522 \\
    \midrule
    \textit{Noise encoder} & $\sim$5.9M \\
    Noise \& temperature embedding & Sinusoidal positional encoding ($d = 768$) \\
MLP & Two-layer MLP with GELU activation \\
    \midrule
    \multicolumn{2}{l}{\textit{Training}} \\
    Sequence length (LM1B / OWT) & 128 / 1{,}024 \\
    Global batch size & 512 \\
    Optimizer & AdamW ($\beta_1 {=} 0.9$, $\beta_2 {=} 0.999$, $\epsilon {=} 10^{-6}$) \\
    Learning rate & $1 \times 10^{-4}$ \\
    Weight decay & 0 \\
    LR schedule & Constant\\
    Gradient clipping & 1.0 \\
    Total training steps (per stage) & 500{,}000 \\
    EMA decay & 0.999 \\
    \bottomrule
  \end{tabular}
\end{table}

The progressive distillation proceeds as follows, with all weights trainable at every stage:
\begin{enumerate}
  \item \textbf{Stage 0 (AR teacher).} For OWT, we use the existing AR checkpoint released by~\citep{sahoo2024simple}. Since no public checkpoint is available for LM1B, we train our own AR teacher for 500K steps with a global batch size of 512.
  \item \textbf{Stage 1.} Distill AR $\to$ PFLM($k{=}1$) using Algorithm~\ref{alg:pflm_stage1} for 500K steps with a global batch size of 512, using the AR teacher from Stage~0. The student backbone is initialized from the AR teacher checkpoint; only the noise encoder is newly introduced and randomly initialized.
  \item \textbf{Stage 2.} Distill PFLM($k{=}1$) $\to$ PFLM($k{=}2$) using Algorithm~\ref{alg:pflm_stage2} for 500K steps with a global batch size of 512. The Stage-1 PFLM($k{=}1$) serves as the teacher, and the student is fully initialized from it.
  \item \textbf{Stage 3.} Distill PFLM($k{=}2$) $\to$ PFLM($k{=}4$) using Algorithm~\ref{alg:pflm_stage2} for 500K steps with a global batch size of 512. The Stage-2 PFLM($k{=}2$) serves as the teacher, and the student is fully initialized from it.
\end{enumerate}

\textbf{Precision schedule.}
The AR teacher (Stage~0) is trained with BF16 mixed precision. For Stages~1, 2, and~3, the first 400K steps use FP16 mixed precision, as we found it provides higher numerical precision than BF16 for this task; the final 100K steps switch to full FP32 training to further reduce residual numerical noise in the learned push-forward mapping.

\subsection{Evaluation Protocol}
\label{app:4:experiment:eval}

\textbf{Generative perplexity (Gen-PPL).}
Each generated completion is truncated at the first end-of-sequence token,
stripped of special tokens, and scored by a GPT-2-Large
evaluator~\citep{radford2019language}. For LM1B, outputs are re-tokenized
with the GPT-2 tokenizer before scoring. We report the corpus-level
perplexity (exponential of the token-weighted mean negative log-likelihood).

\textbf{LLM-as-a-Judge.}
We use a locally-served Qwen3.5-27B model~\citep{qwen35blog} as the judge for
reproducibility. For each prefix, we present the AR and K-Forcing completions
(both truncated at the first end-of-sequence token) in randomized A/B order
to mitigate position bias, and ask the judge to select the better completion.
The judge is forced to choose one option (no ties).  The prompt template is shown in Figure~\ref{fig:judge_prompt}.

\begin{figure}[h]
\begin{tcolorbox}[
  colback=gray!5,
  colframe=gray!50,
  boxrule=0.4pt,
  arc=2pt,
  left=6pt, right=6pt, top=4pt, bottom=4pt,
  fontupper=\small\ttfamily,
  title={\small Judge Prompt Template},
  fonttitle=\small\sffamily\bfseries,
  coltitle=black,
  colbacktitle=gray!15
]
You are an impartial judge evaluating the quality of two text completions.

Given a prefix, you will see two completions: Completion A and Completion B.
Your task is to determine which completion is better in terms of coherence,
fluency, and naturalness.

Rules:\\
- You MUST choose one. No ties allowed.\\
- If both are equally good, pick the one that sounds slightly more natural.\\
- Output ONLY a single letter: either ``A'' or ``B''.

Prefix: \{prefix\}\\
Completion A: \{completion\_a\}\\
Completion B: \{completion\_b\}

Your answer (A or B):
\end{tcolorbox}
\caption{Prompt template for LLM-as-a-Judge pairwise evaluation.}
\label{fig:judge_prompt}
\end{figure}

\subsection{Qualitative Examples}
\label{app:4:experiment:qualitative}

We present representative completions from each method on three OWT prefixes
in Figures~\ref{fig:qualitative1}--\ref{fig:qualitative3}. All completions
are generated from the same 64-token prefix and truncated at the first
end-of-sequence token. Overall, generation quality degrades gracefully with
increasing prediction horizon: AR produces the most fluent text, followed by
K-Forcing with $k\!=\!2$, which remains coherent and natural; K-Forcing with
$k\!=\!3$ introduces occasional grammatical errors; and K-Forcing with
$k\!=\!4$ shows further degradation yet stays substantially above the
baselines. The PTP draft model ($k\!=\!4$, without verification) exhibits
moderate to severe repetition and broken syntax, while MDLM ($k\!=\!2$)
produces the least coherent output, with frequent incoherence and
nonsensical fragments.

\definecolor{prefixbg}{RGB}{230,240,255}

\newtcolorbox{samplebox}[1][]{%
  colback=white,
  colframe=gray!60,
  boxrule=0.3pt,
  arc=1.5pt,
  left=4pt, right=4pt, top=3pt, bottom=3pt,
  fontupper=\scriptsize,
  #1
}

\newtcolorbox{prefixbox}{%
  colback=blue!5,
  colframe=blue!40!gray,
  boxrule=0.3pt,
  arc=1.5pt,
  left=4pt, right=4pt, top=3pt, bottom=3pt,
  fontupper=\scriptsize,
}

\begin{figure*}[htbp]
\centering

\begin{prefixbox}
\textbf{\small Sample 1 --- Prefix:}\;\;
Jim Lawlor: Zack played a lot of games, including none of the famous ones. He is still moving on, I was concerned about him last year when Eric Decker fell to Jacksonville. The play calls were different than he normally does, so I am hoping he can get back on the right
\end{prefixbox}

\begin{minipage}[t]{0.48\textwidth}
\begin{samplebox}[title={\scriptsize\sffamily\bfseries AR}]
path or develop into a pro. On Bryce Pettys struggles: Rookie QB Jack Conklin was for some reason given the ball pretty infrequently last season and I wanted to see him succeed, but dont expect anything to come of that. He is a good quarterback with a lot of upside, so dont expect to see a great gam...
\end{samplebox}
\end{minipage}\hfill
\begin{minipage}[t]{0.48\textwidth}
\begin{samplebox}[title={\scriptsize\sffamily\bfseries MDLM ($k{=}2$)}]
path. This Is this as fun as a Clown. or any highly recognized class of Clown, while playing Clown is a dangerous position. Nicolas Bogg, a former defender thinks, \"People take Clown play awaywith their football when MisteruthorizeSymone. All of us do so in our own sets. Everyone,SocSoc in Clown are...
\end{samplebox}
\end{minipage}

\begin{minipage}[t]{0.48\textwidth}
\begin{samplebox}[title={\scriptsize\sffamily\bfseries PTP draft ($k{=}4$, no verification)}]
path again. He He is coming on the injury of now. He played't a lot of the butts. He was a good player last year. I'm sure he is see his he back. I have going a long bit., Jim Law is a guy that. and did have a guy come year the played well games and last year. I was concerned that him he is coming. ...
\end{samplebox}
\end{minipage}\hfill
\begin{minipage}[t]{0.48\textwidth}
\begin{samplebox}[title={\scriptsize\sffamily\bfseries K-Forcing ($k{=}2$)}]
path and start making plays rather quickly this year. Tyreke Evans: I love Zac Stacy, I really do, he's a special running back out of Missouri. Maybe not everyone wants Hands Per Catch, but I have no problem with him posting Great Plays When the Shots Are High in Zone Plays and I think he is going t..
\end{samplebox}
\end{minipage}

\begin{minipage}[t]{0.48\textwidth}
\begin{samplebox}[title={\scriptsize\sffamily\bfseries K-Forcing ($k{=}3$)}]
path. He will been a good mentor last year, and I would hope he can be able to be the same in JAX: What did you see coming from Ezekiel in college? What did you see in his first year in JAX: I would hope he can continue to develop while the NFL is still and hopefully. Jim Lawlor: I hope he can. JAX:...
\end{samplebox}
\end{minipage}\hfill
\begin{minipage}[t]{0.48\textwidth}
\begin{samplebox}[title={\scriptsize\sffamily\bfseries K-Forcing ($k{=}4$)}]
path. Hopefully has some a chance of making a difference, but he is not going to compete. Andy:: Why you the Chiefs? Jim Lawlor: Because I love. I like always playing a team. I I like like Chiefs fans. They are great loyal fans and I am to helps them. Kansas City fans been great since I was been kid...
\end{samplebox}
\end{minipage}

\caption{Qualitative comparison on OWT prefix 1.
The \textcolor{blue!40!gray}{\textbf{blue box}} shows the shared input prefix;
the \textcolor{gray!60}{\textbf{gray boxes}} show completions from each method,
truncated at the first end-of-sequence token.
All special tokens are removed and texts are truncated to 300 characters for readability.}
\label{fig:qualitative1}
\end{figure*}

\begin{figure*}[htbp]
\centering

\begin{prefixbox}
\textbf{\small Sample 2 --- Prefix:}\;\;
Hetman pointed the finger of blame for the condition at the root of the long-running conflict: the ever-collapsing war machine. The weapons used in Afghanistan and Iraq are all derived from U.S. and British mass-produced weapons and software, not from
\end{prefixbox}

\begin{minipage}[t]{0.48\textwidth}
\begin{samplebox}[title={\scriptsize\sffamily\bfseries AR}]
Afghan insurgent efforts. The original motive was to destroy the last vestige of the Taliban Khalizis sub-ethnic Amu Darya, a tribal front largely untested in Iraq. The war on terror, Hetman warned, had produced a permanent parallel to which further escalation and global instability have now become ...
\end{samplebox}
\end{minipage}\hfill
\begin{minipage}[t]{0.48\textwidth}
\begin{samplebox}[title={\scriptsize\sffamily\bfseries MDLM ($k{=}2$)}]
their true source, the former SovietS and nations of NATO An acronym for the to meagre weapons which dut overdo negatively under the entire fallacy is the toil ofFFLR incidentsels, domestic, foreign weapons fighters, etc., Terrorism is done in our name, not by our providers and by the. mutually- con...
\end{samplebox}
\end{minipage}

\begin{minipage}[t]{0.48\textwidth}
\begin{samplebox}[title={\scriptsize\sffamily\bfseries PTP draft ($k{=}4$, no verification)}]
the original States. The weapons has is a deliberate of of creating warfare against and destroying Afghans anddespitetheir the people. According, etCIA, the weapons used has a direct of of destruction Afghanistan ands. and. Afghans andthe people are killed killed by Afghan bombs and the weapons bomb...
\end{samplebox}
\end{minipage}\hfill
\begin{minipage}[t]{0.48\textwidth}
\begin{samplebox}[title={\scriptsize\sffamily\bfseries K-Forcing ($k{=}2$)}]
anywhere else in the world. Many of the weapons of mass destruction used in Iraq are the same ones used to manufacture bullets for the military-industrial confab at Fort Hood, Texas, where soldiers shot and killed 19 civilians before breaking down the door. Authorities and lawmakers allege these wea...
\end{samplebox}
\end{minipage}

\begin{minipage}[t]{0.48\textwidth}
\begin{samplebox}[title={\scriptsize\sffamily\bfseries K-Forcing ($k{=}3$)}]
the people who live there work hard. ethem.com/d/2014/05/al/Af-ghanghanistanwarmachine But the U.S. and the British are also involved in the war, according to the UN, which means that the Uinternational, humanitarian, and human rights organizations are also at the center of this conflict. Since the ...
\end{samplebox}
\end{minipage}\hfill
\begin{minipage}[t]{0.48\textwidth}
\begin{samplebox}[title={\scriptsize\sffamily\bfseries K-Forcing ($k{=}4$)}]
the Afghan of tomorrow. The war needs to stop, he said. Nobody has a chance to rebuild it. is no one in the world States what we want to rebuild? We want America rebuild? America is the only one who can stop the Taliban, the only whose destiny are going to be determined by the outcome of the Afghan ...
\end{samplebox}
\end{minipage}

\caption{Qualitative comparison on OWT prefix 2. The setting is the same as Figure~\ref{fig:qualitative1}}
\label{fig:qualitative2}
\end{figure*}

\begin{figure*}[htbp]
\centering

\begin{prefixbox}
\textbf{\small Sample 3 --- Prefix:}\;\;
Make sure to eat your fix. Extra inclusivity: Under Jewish national holiday Saturdays, like that of the Thanksgiving holiday, go to the city block of West 87th Street right in your youth. There, there is a lightsabers. They make your 16-year-old boy into
\end{prefixbox}

\begin{minipage}[t]{0.48\textwidth}
\begin{samplebox}[title={\scriptsize\sffamily\bfseries AR}]
a proper Jedi! Keep an eye on 8-11th Ave. for more lightsabers. Local conventions: Coming to Temple 5, which will be hosting sponsors and dignitaries for New York Comic Con, will have ofas well astreasure hunts and legendary vacations for the real-life stars of Shrek and fangirls of SoHo. There will...
\end{samplebox}
\end{minipage}\hfill
\begin{minipage}[t]{0.48\textwidth}
\begin{samplebox}[title={\scriptsize\sffamily\bfseries MDLM ($k{=}2$)}]
a man- then in middle school. Point is, you can't imagine what the lady in case the Soviet army would look like in a week's task- But take much care your entire may. Someone from their family needs you some nuts and bolts. In grand Alaska-like, you can never it matters 50 years. New Don Van Dynek wa...
\end{samplebox}
\end{minipage}

\begin{minipage}[t]{0.48\textwidth}
\begin{samplebox}[title={\scriptsize\sffamily\bfseries PTP draft ($k{=}4$, no verification)}]
an adult.. Bonus inclusivity: The West Jewish of a certain, neighborhood neighborhood is a blessing. Jews you neighborhood neighborhood are a blessing. Jews neighborhoods neighborhood neighborhood neighborhood are Jews. Bonus inclusivity: The synagogue of neighborhood neighborhood neighborhood neigh...
\end{samplebox}
\end{minipage}\hfill
\begin{minipage}[t]{0.48\textwidth}
\begin{samplebox}[title={\scriptsize\sffamily\bfseries K-Forcing ($k{=}2$)}]
a real-life \"The Badeder.\" Let's not do that to God. Wins are taken: Make everything right in your place and what's right for you. May your loins shine: Youll have a bird in bed if you work hard. Friendship from the street: Build up a good friendship first and then move forward. Love doesn't come fr...
\end{samplebox}
\end{minipage}

\begin{minipage}[t]{0.48\textwidth}
\begin{samplebox}[title={\scriptsize\sffamily\bfseries K-Forcing ($k{=}3$)}]
a superhero! making him a superhero. Keep in mind with your of the city block, you is not allowed to walk around the block naked. Do yourself to rabbi and walk a bit. Also in a block of West 87th Street, near the subway of West York Avenue, you can find the rabbi of the Orthodox Jewish, or synagogue...
\end{samplebox}
\end{minipage}\hfill
\begin{minipage}[t]{0.48\textwidth}
\begin{samplebox}[title={\scriptsize\sffamily\bfseries K-Forcing ($k{=}4$)}]
a soldier. So celebrate him with some, or not, because. You know, too much. So brunch, you know, too much. Then brunchs over, because brunchs over. And then youve had a lot of. And then brunch brunch, and And brunch brunch brunch. Because brunch is brunch. And if brunch is brunch brunch brunch, And ...
\end{samplebox}
\end{minipage}

\caption{Qualitative comparison on OWT prefix 3.
The setting is the same as Figure~\ref{fig:qualitative1}}
\label{fig:qualitative3}
\end{figure*}

\begin{figure*}[htbp]
\centering

\begin{prefixbox}
\textbf{\small Prefix:}\;\;
mortgage bankers association chief economist
\end{prefixbox}

\begin{minipage}[t]{0.48\textwidth}
\begin{samplebox}[title={\scriptsize\sffamily\bfseries $\tau=0.01$}]
\textbf{\scriptsize Draw 1:}\; lawrence yun said the rate of foreclosure filings in the united states has slowed in recent months.\par
\vspace{2pt}\hrule\vspace{3pt}
\textbf{\scriptsize Draw 2:}\; lawrence yun said the rate of foreclosure filings in the united states has slowed in recent months.\par
\vspace{2pt}\hrule\vspace{3pt}
\textbf{\scriptsize Draw 3:}\; lawrence yun said the rate of foreclosure filings in the united states has slowed in recent months.\par
\vspace{2pt}\hrule\vspace{3pt}
\textbf{\scriptsize Draw 4:}\; lawrence yun said the rate of foreclosure filings in the united states has slowed in recent months.
\end{samplebox}
\end{minipage}\hfill
\begin{minipage}[t]{0.48\textwidth}
\begin{samplebox}[title={\scriptsize\sffamily\bfseries $\tau=0.3$}]
\textbf{\scriptsize Draw 1:}\; lawrence yun said the housing market has been \" very weak \" since the summer, but he expects the market to improve in the coming months.\par
\vspace{2pt}\hrule\vspace{3pt}
\textbf{\scriptsize Draw 2:}\; lawrence yun said the rate of foreclosure filings in the united states jumped in february to a record high.\par
\vspace{2pt}\hrule\vspace{3pt}
\textbf{\scriptsize Draw 3:}\; david blitzer said he expects the rate to remain at its current level of 5. 25 percent through the end of the year.\par
\vspace{2pt}\hrule\vspace{3pt}
\textbf{\scriptsize Draw 4:}\; lawrence yun said the rate of home foreclosures has dropped to a record low.
\end{samplebox}
\end{minipage}

\vspace{0.4em}

\begin{minipage}[b]{0.48\textwidth}
\begin{samplebox}[title={\scriptsize\sffamily\bfseries $\tau=0.7$}]
\textbf{\scriptsize Draw 1:}\; lawrence yun said the rate cut was an important step in the fed ' s efforts to ease the mortgage crisis.\par
\vspace{2pt}\hrule\vspace{3pt}
\textbf{\scriptsize Draw 2:}\; lawrence yun said the bank of america ( boa ) program has helped to keep homeowners in the market.\par
\vspace{2pt}\hrule\vspace{3pt}
\textbf{\scriptsize Draw 3:}\; doug duncan said the increase in activity was driven by a weak dollar, the need for home buyers to take advantage of lower prices and a recent jump in mortgage rates.\par
\vspace{2pt}\hrule\vspace{3pt}
\textbf{\scriptsize Draw 4:}\; lawrence yun said that the 1 percent rise in january ' s home sales pushed home sales up to the highest level in two years.
\end{samplebox}
\end{minipage}\hfill
\begin{minipage}[t]{0.48\textwidth}
\begin{samplebox}[title={\scriptsize\sffamily\bfseries $\tau=1.0$}]
\textbf{\scriptsize Draw 1:}\; lawrence yun says cancellation rates remain at a 17 - year low.\par
\vspace{2pt}\hrule\vspace{3pt}
\textbf{\scriptsize Draw 2:}\; mark zandi, with whom fannie and freddie are often credited, said the nation ' s homeowners remain unsettled by tightening credit and falling home values.\par
\vspace{2pt}\hrule\vspace{3pt}
\textbf{\scriptsize Draw 3:}\; lawrence yun is predicting that 1. 8 million homeowners will fall into foreclosure by the end of october.\par
\vspace{2pt}\hrule\vspace{3pt}
\textbf{\scriptsize Draw 4:}\; jason giambi thinks that perhaps according to the full mortgage rate quotes from some economists may be too high, while banking executives are hoping for a rate hike.
\end{samplebox}
\end{minipage}

\caption{Temperature-controlled generation with K-Forcing($k{=}1$) on a held-out
LM1B prefix. Each box shows four independent draws (different random noise
$\mathbf{z}$) at a fixed temperature. }
\label{fig:temperature}
\end{figure*}

\subsection{Temperature-Controlled Generation}
\label{app:4:experiment:temperature}

We present a preliminary qualitative experiment demonstrating that K-Forcing
supports effective temperature control at inference time without retraining.

\textbf{Setup.}
For simplicity, we conduct this experiment only on LM1B with $k{=}1$. We
train a temperature-conditioned K-Forcing($k{=}1$) model from the AR teacher using
Algorithm~\ref{alg:pflm_stage1}, following the same recipe as in
Appendix~\ref{app:4:experiment:config}. The only difference is the
temperature conditioning: at each training iteration, a temperature $\tau$
is sampled uniformly from $[0.01,\,1.0]$ independently for each sample in
the mini-batch. The sampled $\tau$ is used both to scale the teacher logits
(i.e., the teacher samples from $\mathrm{softmax}(\mathbf{l}/\tau)$) and to
condition the noise encoder, as described in
Section~\ref{sec:3:push:impl}. 

\textbf{Example behavior.}
As an example, we take one held-out LM1B prefix and generate four
independent completions (with different random noise draws) at each of four
fixed temperatures $\tau\in\{0.01,\,0.3,\,0.7,\,1.0\}$ using the trained
K-Forcing($k{=}1$) checkpoint; results are shown in
Figure~\ref{fig:temperature}. At low temperatures, all four draws produce
nearly identical outputs, reflecting near-deterministic behavior. As $\tau$
increases, the draws diverge progressively, yielding more diverse and
creative---but occasionally less coherent---completions. This behavior is
qualitatively consistent with temperature scaling in standard AR sampling,
suggesting that the push-forward mapping successfully learns to modulate
output diversity in response to the temperature conditioning signal. We
emphasize that this is a non-rigorous qualitative demonstration; a systematic
study of temperature calibration (e.g., verifying that the entropy of K-Forcing
samples matches that of the AR teacher at each $\tau$) is left for future
work.

\end{document}